\documentclass[11pt]{article}

\usepackage[letterpaper,margin=1in]{geometry}
\usepackage[numbers,sort&compress]{natbib}
\usepackage[T1]{fontenc}
\usepackage[utf8]{inputenc}
\usepackage{amsmath,amssymb,amsthm}
\IfFileExists{newtxtext.sty}
  {\usepackage{newtxtext,newtxmath}}
  {\usepackage{mathptmx}}
\usepackage{microtype}

\usepackage{graphicx}
\usepackage{booktabs}
\usepackage[font=small,labelfont=bf]{caption}
\usepackage{xcolor}

\usepackage{algorithm}
\usepackage{algpseudocode}

\usepackage{tikz}
\usetikzlibrary{automata, positioning, arrows.meta, calc, bending}

\usepackage{url}
\usepackage[colorlinks=true, linkcolor=blue, citecolor=blue, urlcolor=blue]{hyperref}
\usepackage{cleveref}

\IfFileExists{orcidlink.sty}
  {\usepackage{orcidlink}}
  {\newcommand{\orcidlink}[1]{\href{https://orcid.org/#1}{\textsuperscript{iD}}}}

\newcommand{\Description}[1]{}

\tikzset{
  dfa/.style={
    >={Stealth[round]},
    shorten >=1pt,
    on grid,
    auto,
    node distance=3.0cm,
    every state/.style={
      minimum size=1.05cm, inner sep=1pt, semithick,
      draw=black!80, fill=white
    },
    accepting/.style={double, double distance=1.1pt},
  },
  viol/.style={fill=black!8, draw=black!55}, 
}

\newcommand{\phiD}{\varphi_D}
\newcommand{\phiA}{\varphi_A}
\newcommand{\AD}{\mathcal{A}_D}
\newcommand{\AAtk}{\mathcal{A}_A}
\newcommand{\Sh}{\mathrm{Sh}}
\newcommand{\turnD}{\textsc{def}}
\newcommand{\turnA}{\textsc{adv}}
\newcommand{\calH}{\mathcal{H}}   
\newcommand{\Links}{\mathcal{L}}  
\newcommand{\St}{\Sigma_{\calH}}  
\newcommand{\Net}{\mathcal{N}}                  
\newcommand{\Mnet}{\mathcal{M}_{\mathrm{net}}}   
\newcommand{\calW}{\mathcal{W}}
\newcommand{\Attr}{\mathrm{Attr}}
\newcommand{\Safe}{\mathrm{Safe}}
\newcommand{\Legal}{\Lambda}
\newcommand{\Succ}{\mathrm{Succ}}

\theoremstyle{plain}
\newtheorem{theorem}{Theorem}[section]

\usepackage{thmtools}
\declaretheorem[style=definition,sibling=theorem,name=Definition,refname={Definition,Definitions},Refname={Definition,Definitions}]{definition}
\declaretheorem[style=definition,sibling=theorem,name=Remark,refname={Remark,Remarks},Refname={Remark,Remarks}]{remark}
\declaretheorem[sibling=theorem,name=Proposition,refname={Proposition,Propositions},Refname={Proposition,Propositions}]{proposition}

\begin{document}

\title{\bfseries Shielded Analysis:\\[2pt]
Certification and Characterization of Defensibility in Systems under Adversarial Interaction}

\author{%
  Achraf Hsain\,\orcidlink{0009-0000-5817-8109}\thanks{Corresponding author.}\\[2pt]
  \normalsize Information and Computer Science Department\\
  \normalsize King Fahd University of Petroleum and Minerals\\
  \normalsize Dhahran, Saudi Arabia\\
  \normalsize \texttt{g202518970@kfupm.edu.sa}
  \and
  Sultan Almuhammadi\,\orcidlink{0000-0003-0269-1792}\\[2pt]
  \normalsize Information and Computer Science Department\\
  \normalsize King Fahd University of Petroleum and Minerals\\
  \normalsize Dhahran, Saudi Arabia\\
  \normalsize \texttt{muhamadi@kfupm.edu.sa}
}

\date{}

\maketitle

%
%

\begin{abstract}
\setlength{\parindent}{0pt}
\setlength{\parskip}{0.8em}
\noindent
Formal safety analysis determines whether a system admits a safe defense;
adaptive evaluation characterizes the operating quality sustained under
adversarial interaction. Both answers matter because systems with the same
safety verdict can impose very different operational burdens.

We introduce \emph{shielded analysis}, a design-time framework that derives
these answers from one encoded system while keeping the safety requirement and
admissible threat model independently variable. It returns a defensibility
certificate and a four-axis defensibility fingerprint spanning structural
margin, shield latitude, and adaptive operating quality. Each axis is
informative in its own right; their relationships show whether formal and
operational assessments agree, diverge, or respond differently to system
changes.

We instantiate the framework for network defense on a reference segment and
four controlled perturbations spanning topology, safety requirements, and
adversary capabilities. Every configuration is certified defensible, yet two
topology variants with nearly identical structural profiles sustain mean
clean-host fractions of $22.7\%$ and $80.7\%$ under adaptive pressure.

Shielded analysis turns a safety-game solution into a comparative instrument:
it determines whether a defense exists, characterizes what that defense
requires, and identifies which system changes strengthen it.
\end{abstract}

\medskip
\noindent\textbf{Keywords:} Shielded analysis, system defensibility, safety games, shield synthesis, adversarial multi-agent reinforcement learning, network security.

\section{Introduction}
\label{sec:intro}

Shielded reinforcement learning (ShRL)~\cite{alshiekh2018,konighofer2017,bloem2015}
compiles a temporal-logic safety specification into a deterministic finite
automaton, builds a product game with a model of the system, computes the
winning region by attractor fixed-point iteration, and extracts a shield: a
lookup table of permitted actions under which the specification is never
violated.
The literature deploys it as a \emph{runtime enforcement mechanism}: the shield
filters an RL agent's actions during training or deployment, keeping the agent
safe while it optimizes a reward
signal~\cite{alshiekh2018,konighofer2017,konighofer2025}.

The winning region and its fixed-point structure contain design information
beyond the extracted action filter. \emph{Shielded analysis} elevates that solved
game to the primary analytical object and combines its certificate, region
geometry, attractor structure, action latitude, and shield-constrained adaptive
behavior in a unified design-time analysis. Given a system model, defender
requirement, and adversary constraint, it determines whether the configuration
admits a safe defense and characterizes the structural and operational
conditions under which that defense is sustained. This paper formalizes the
framework and validates it through a controlled network-security instantiation.

The framework's object is a two-player turn-based safety game~\cite{gradel2002},
augmented with a second temporal
specification that constrains the adversary and is enforced asymmetrically
against the first: a defender specification defines what must never happen and
seeds the violation set; an adversary specification defines which adversary
behaviors are admissible and constrains the adversary during solution. We call
the augmented object an \emph{asymmetrically constrained safety game}. The
framework returns a \emph{defensibility certificate}: from a
designated initial state, the system either admits a defender strategy
satisfying its specification against every admissible adversary behavior, or
provably admits none. The winning region witnessing that certificate then
bounds adversarial multi-agent reinforcement learning through its derived
shield, adding an operational layer, and metrics over the formal structure and
the learned behavior compose a \emph{defensibility fingerprint}.

We instantiate the framework on network topological safety under adversarial
attack: whether a given network segment is defensible, how defensible it is,
and which architectural changes would improve the margin. Current tools answer these
questions in part. Static configuration
verification~\cite{fogel2015,headerspace} decides reachability and isolation
properties but does not model a strategic adversary.
Reinforcement-learning-based adversarial evaluation~\cite{standen2021,kiely2025}
produces operational readings but no certificate: a policy that survives one
adversary may fail against another, and no amount of training establishes that
a configuration is indefensible. The instantiation returns a certificate and
fingerprint for a reference topology and four controlled perturbations.

We develop this analytical paradigm through three contributions:

\begin{enumerate}
    \item \textbf{Shielded analysis through asymmetrically constrained safety games.}
    We introduce a design-time analytical framework in which two independently
    specified temporal requirements enter distinct stages of game solution:
    $\phiD$ seeds the attractor's initial unsafe set, while $\phiA$ filters the
    adversary's successor set during fixed-point iteration. This asymmetric
    construction keeps the safety requirement and threat boundary independently
    manipulable and returns a defensibility certificate witnessed by the winning
    region and executable as a shield (\Cref{sec:sa-framework}).

    \item \textbf{A relational reading of the solved game.} The defensibility
    fingerprint combines three static readings of the fixed point and shield
    with one adaptive reading of shield-constrained play. Each reading
    characterizes a distinct aspect of defensibility, while their relationships
    support direct comparison of structural and operational assessments
    (\Cref{sec:marl}, \Cref{sec:metrics}).

    \item \textbf{Network-security instantiation and controlled evaluation.}
    We instantiate the framework on a network defense game and evaluate a
    reference configuration under four controlled topology and specification
    changes. The evaluation distinguishes certification from operational
    quality, localizes structural effects by attractor depth, and quantifies how
    system changes affect the four defensibility axes
    (\Cref{sec:instantiation}, \Cref{sec:experiments}).
\end{enumerate}

Our implementation, experiments, and topology data are publicly available.\footnote{\url{https://github.com/AchrafHsain7/Bastion}} Code is released under the MIT License; experimental data and outputs under CC-BY-4.0.

\section{Preliminaries}
\label{sec:prelim}

\Cref{tab:notation} collects the notation used throughout.

\begin{table}[!t]
\centering
\caption{Notation used throughout the paper. Framework-level symbols appear
first, followed by those introduced by the network instantiation
(\Cref{sec:instantiation}).}
\label{tab:notation}
\footnotesize
\begin{tabular}{@{}ll@{}}
\toprule
\textbf{Symbol} & \textbf{Meaning} \\
\midrule
\multicolumn{2}{@{}l}{\textit{Automata and specifications}}\\
$\mathcal{A}=(Q,\Sigma,\delta,q_0,F)$ & Deterministic finite automaton \\
$\square\,\psi$              & LTL ``always $\psi$'' (safety) \\
$\otimes$                    & Synchronous product of automata \\
$\phiD,\ \phiA$              & Defender / adversary specifications \\
$\AD,\ \AAtk$                & The safety automata they compile to \\
$Q_D,Q_A$;\ $F_D,F_A$        & Their state sets; their accepting sets \\
$\delta_D,\delta_A$;\ $\Sigma_D,\Sigma_A$ & Their transition functions; their alphabets \\
$q_D,\ q_A$                  & State of $\AD$ / $\AAtk$ reached along a play \\
$\sigma_D(v,a),\ \sigma_A(v,a)$ & Symbols $\AD$ / $\AAtk$ read on move $a$ from $v$ \\
\addlinespace
\multicolumn{2}{@{}l}{\textit{Game structure and product game}}\\
$\mathcal{C}=(\mathcal{M},\phiD,\phiA)$ & Asymmetrically constrained safety game \\
$\mathcal{M}$                & Game structure (\Cref{def:casg}) \\
$S$;\ $S_D,\ S_A$            & Its states; defender-turn / adversary-turn blocks \\
$s,\ s_0$                    & A state of $\mathcal{M}$; the initial state \\
$\mathrm{Act}_D,\ \mathrm{Act}_A$ & Defender / adversary action sets \\
$\delta$                     & Transition function of $\mathcal{M}$, alternating the turn \\
$\mu: S\to[0,1]$             & Dominance function \\
$G_\times=(V,V_D,V_A,E,v_0)$ & Product game: positions $V = S\times Q_D\times Q_A$, \\
                             & \quad turn partition $V_D,V_A$, edges $E$ \\
$v,\ v_0$                    & A product position; the initial position \\
$U$                          & Initial unsafe set $\Attr_0 = \{v : q_D\notin F_D\}$ \\
$E(v),\ \Legal(v)$           & Successors of $v$; those admissible under $\phiA$ \\
$\Succ(v,a)$                 & Successor of $v$ under action $a$ \\
\addlinespace
\multicolumn{2}{@{}l}{\textit{Safety-game solution}}\\
$\calW,\ \Attr^{*}$          & Winning region; attractor $V\setminus\calW$ \\
$\operatorname{rank}(v)$     & Moves within which violation can be forced from $v$ \\
$\Attr_k,\ \Sh_k$            & Attractor after iteration $k$; shell $\Attr_k\setminus\Attr_{k-1}$ \\
$k_{\max},\ p_k,\ H$         & Deepest shell; shell mass fraction; entropy of $(p_k)$ \\
$\Safe(v),\ \mathcal{R}$     & Shield-permitted actions at $v$; shielded-reachable set \\
\addlinespace
\multicolumn{2}{@{}l}{\textit{Defensibility metrics}}\\
$\mathrm{WIN},\ \mathrm{STP}$ & Winning fraction; shell steepness (static) \\
$\mathrm{SLT},\ \mathrm{DDR}$ & Shield latitude (static); defender dominance (adaptive) \\
\addlinespace
\multicolumn{2}{@{}l}{\textit{Learning}}\\
$o(v),\ Q_i(o(v),a)$         & Learner observation $o(v)=s$; Q-value, $i\in\{D,A\}$ \\
$\alpha,\ \gamma,\ R_i$      & Learning rate; discount; per-player reward \\
$\varepsilon,\ N_{\mathrm{ep}},\ T$ & Exploration rate; training episodes; steps per episode \\
$N_{\mathrm{run}},\ N_{\mathrm{rep}}$ & Independent training runs; reporting-window length \\
$S_{\mathrm{init}},\ p$      & Re-initialization set; engagement-reset probability \\
\midrule
\multicolumn{2}{@{}l}{\textit{Network instantiation}}\\
$\Net=(\calH,\Links)$        & Network segment: hosts $\calH$, directed links $\Links$ \\
$h$                          & A host, $h\in\calH$ \\
$\St=\{C,X,D,I,Z\}$          & Host statuses (Clean, Compromised, Detected, Isolated, Destroyed) \\
$\Mnet$                      & Network defense game: the $\mathcal{M}$ of this instance (\Cref{def:ndg}) \\
$\tau \in \{\turnD,\turnA\}$ & Turn indicator: defender / adversary to move \\
$n_C(s),\ \mathrm{active}(s)$ & Clean hosts; hosts neither Isolated nor Destroyed \\
\bottomrule
\end{tabular}
\end{table}

\begin{definition}[Deterministic Finite Automaton \cite{baier2008}]
A DFA is a quintuple $\mathcal{A} = (Q, \Sigma, \delta, q_0, F)$ with finite state
set $Q$, finite input alphabet $\Sigma$, transition function
$\delta: Q \times \Sigma \to Q$, initial state $q_0$, and set of accepting states
$F \subseteq Q$.
\end{definition}

\begin{definition}[Safety Automaton \cite{baier2008}]
A \emph{safety automaton} is a DFA used to recognize a property the system must
\emph{always} satisfy: a run is safe if and only if every state it visits belongs
to $F$. We assume without loss of generality that the violating states are
\emph{absorbing} --- for every $q \notin F$ and every $\sigma \in \Sigma$,
$\delta(q,\sigma) \notin F$ --- so that once a run leaves $F$ it can never return
and a safety violation is permanent.
\end{definition}

Linear temporal logic (LTL) safety formulas of the form $\square\, \psi$
(``always $\psi$'') compile mechanically into safety automata~\cite{baier2008,kupferman2001}.
When several safety properties must hold simultaneously, their automata are
combined via \emph{synchronous product}. Given DFAs
$\mathcal{A}_1 = (Q_1, \Sigma_1, \delta_1, q_{0}, F_1)$ and
$\mathcal{A}_2 = (Q_2, \Sigma_2, \delta_2, q^{'}_{0}, F_2)$

\[
  \mathcal{A}_1 \otimes \mathcal{A}_2 =
  \big(Q_1 \times Q_2,\ \Sigma_1 \times \Sigma_2,\ \delta_\otimes,\ (q_{0}, q^{'}_{0}),\ F_1 \times F_2\big),
  \quad
  \delta_\otimes\big((q_1,q_2),(\sigma_1,\sigma_2)\big) = \big(\delta_1(q_1,\sigma_1),\, \delta_2(q_2,\sigma_2)\big).
\]
Each component reads its own alphabet, and the product accepts exactly when both
components do. Where the two automata read a common alphabet this is
$L(\mathcal{A}_1) \cap L(\mathcal{A}_2)$; either way the product enforces both
properties at once.

\begin{definition}[Safety Game \cite{gradel2002}]
A \emph{safety game} is played on a game arena
$\Gamma = (V, V_D, V_A, E, v_0)$, where $(V, E)$ is a directed graph,
$(V_D, V_A)$ is a partition of $V$ called the \emph{turn partition}, and
$v_0 \in V$ is the initial position. We call the vertices \emph{positions} and the
edges \emph{moves}, and we assume every position has at least one outgoing move so
that plays are infinite. Player~0 (the \emph{defender}) owns the positions in
$V_D$; Player~1 (the \emph{adversary}) owns those in $V_A$. The owner of the current
position chooses an outgoing move. A \emph{safety objective} is given by an unsafe
set $U \subseteq V$: an infinite play is winning for Player~0 if and only if it
never visits $U$.
\end{definition}

\begin{definition}[Winning Region and Attractor \cite{gradel2002}]
The \emph{winning region} $\calW \subseteq V$ is the largest set of positions from
which Player~0 has a strategy guaranteeing the safety objective: from every
$v \in \calW$, Player~0 can ensure $U$ is never visited, regardless of Player~1's
strategy. Its complement, the \emph{attractor} $\Attr^* = V \setminus \calW$, is the
set of positions from which Player~1 can force a visit to $U$ in finitely many
moves, no matter how Player~0 plays.
\end{definition}

\paragraph{Attractor Computation.}
The attractor is computed as a least fixed point. Initialize $\Attr_0 = U$. At each
iteration $i$: (1)~add any Player~1 position $v \notin \Attr_i$ that has \emph{at
least one} successor in $\Attr_i$ (the adversary needs only one path to danger);
(2)~add any Player~0 position $v \notin \Attr_i$ whose \emph{every} successor lies
in $\Attr_i$ (the defender is lost only when every successor is already in the
attractor). The process terminates at the least fixed point $\Attr^*$ when no new positions are added.
Complexity is $O(|V| + |E|)$~\cite{gradel2002,baier2008}. The positions added at
iteration $k$ form what we call the \emph{attractor shell}
$\Sh_k = \Attr_k \setminus \Attr_{k-1}$: positions exactly $k$ optimal moves from
violation. These shells carry structural information about the game graph that we
exploit for metric construction (\Cref{sec:metrics}).

\paragraph{Minimax Q-Learning.}
In a two-player zero-sum Markov game~\cite{littman1994}, the two players' rewards
sum to zero. Where both players act simultaneously, minimax Q-learning evaluates
a successor by solving a matrix game at each state for the minimax value over
mixed strategies, and convergence of that update to the Markov-game value is
established under standard conditions~\cite{littman1996,littman2001}. The game
of \Cref{def:casg} is turn-based: exactly one player chooses at each position.
There is no simultaneous move to randomize over, and finite discounted
turn-based games admit optimal pure positional strategies, so the Bellman
operation at a position is a maximum or a minimum according to ownership. Under
a current-player value convention this is the negated opponent maximum, and we
maintain a Q-table per player updated by the cross-table rule
\[
  Q_i(s,a) \leftarrow Q_i(s,a) + \alpha\Big[R_i + \gamma \cdot \big(-\max_{a'} Q_j(s',a')\big) - Q_i(s,a)\Big],
\]
where $j$ denotes the opponent. \Cref{sec:marl} instantiates this update under
the shield constraints and declared adaptive protocol.

\section{The Shielded-Analysis Framework}
\label{sec:sa-framework}

This section defines the defensibility certificate and fingerprint, followed by
the game construction, solution algorithm, shield extraction, and two-layer
methodology.

\subsection{Asymmetrically Constrained Safety Games}
\label{sec:casg}

Two objects carry the framework. The analyst supplies $\mathcal{C}$: a system
model together with two specifications. From $\mathcal{C}$ the framework
derives $G_\times$, the arena the solver runs on. Every result that
follows---certificate, winning region, attractor shells, shield---is a
property of $G_\times$.

\begin{definition}[Asymmetrically Constrained Safety Game]
\label{def:casg}
An \emph{asymmetrically constrained safety game} is a tuple
$\mathcal{C} = (\mathcal{M}, \phiD, \phiA)$ with the following components.
\begin{itemize}
  \item $\mathcal{M} = (S, S_D, S_A, s_0, \mathrm{Act}_D, \mathrm{Act}_A, \delta, \mu)$
  is a two-player turn-based deterministic game
  structure~\cite{gradel2002,piterman2006} augmented with a dominance
  function: a finite state space $S$ partitioned into defender-turn states
  $S_D$ and adversary-turn states $S_A$; an initial state $s_0$; finite
  defender and adversary action sets $\mathrm{Act}_D$ and $\mathrm{Act}_A$; a
  deterministic transition function $\delta$ alternating the turn; and a
  \emph{dominance function} $\mu : S \to [0,1]$ scoring how favorable a state
  is to the defender.

  \item $\phiD$ is a \emph{defender specification}: a temporal safety property
  over the states of $\mathcal{M}$, compiled into a safety automaton
  $\AD = (Q_D, \Sigma_D, \delta_D, q_{0,D}, F_D)$.

  \item $\phiA$ is an \emph{adversary specification}: a temporal safety
  property over the adversary's behavior, compiled into a safety automaton
  $\AAtk = (Q_A, \Sigma_A, \delta_A, q_{0,A}, F_A)$.
\end{itemize}
\end{definition}

The dominance function is inert for the solution: the certificate, the winning
region, and the shell decomposition are computed from
$(S, S_D, S_A, s_0, \mathrm{Act}_D, \mathrm{Act}_A, \delta)$ alone. Only the
adaptive metric of \Cref{sec:ddr} reads $\mu$.

The two automata run synchronously with the game on a product state
$(s, q_D, q_A)$, and enter the solution at different points.

$\phiD$ is the safety objective. Either player may drive $\AD$ out of
$F_D$---the adversary by attacking, the defender by responding badly---and the
product states with $q_D \notin F_D$ form the unsafe set from which the
solution starts.

$\phiA$ delimits the adversary's admissible behavior: a resource bound, an
ordering constraint, a capability restriction. Adversary moves that would take
$\AAtk$ out of $F_A$ are removed during solution. Since $\phiA$ is a property of what the adversary does, only adversary moves
take $\AAtk$ out of $F_A$. Defender moves may advance it, as a constraint
referring to the defender's actions requires, but never out of its accepting
set: we require of every admissible $\phiA$ that
\begin{equation}
\label{eq:phia-defender}
q_A \in F_A,\ v \in V_D
\;\Longrightarrow\;
\delta_A\bigl(q_A, \sigma_A(v,a)\bigr) \in F_A
\qquad\text{for every defender action } a .
\end{equation}
Without \eqref{eq:phia-defender} a defender move could drive $\AAtk$ into its
rejecting sink, and the rejecting region would no longer be reachable only
through the edges $\Legal$ filters.

$\phiD$ thus fixes the target of the fixed-point computation and $\phiA$ its
edge set. Neither alphabet is constrained: both automata may read state labels,
actions, or both.

\begin{definition}[Product Game]
\label{def:product}
Let $\sigma_D(v,a) \in \Sigma_D$ and $\sigma_A(v,a) \in \Sigma_A$ be the
symbols $\phiD$ and $\phiA$ read on move $a$ from $v$. The \emph{product game}
of $\mathcal{C}$ is the arena
$G_\times = (V, V_D, V_A, E, v_0)$ with
$V = S \times Q_D \times Q_A$,
$V_D = S_D \times Q_D \times Q_A$,
$V_A = S_A \times Q_D \times Q_A$, initial position
\[
  v_0 = \bigl(s_0,\ \delta_D(q_{0,D}, \sigma_D(s_0)),\ \delta_A(q_{0,A}, \sigma_A(s_0))\bigr),
\]
so that both automata consume the label of $s_0$ when the product game is
built---a system whose initial state already violates $\phiD$ is then seeded
directly into $U$ rather than absorbed one iteration later---and an edge
$\bigl((s,q_D,q_A),(s',q_D',q_A')\bigr) \in E$
whenever, for some action $a$ of the player owning $s$,
\[
  s' = \delta(s,a), \qquad
  q_D' = \delta_D\bigl(q_D, \sigma_D(v,a)\bigr), \qquad
  q_A' = \delta_A\bigl(q_A, \sigma_A(v,a)\bigr).
\]
Because $\delta$ and both automata are deterministic, each action of the player
owning $v$ induces exactly one such $v'$; we write $\Succ(v,a) = v'$ for it.
Its \emph{unsafe set} is $U = \{v \in V : q_D \notin F_D\}$, and its
\emph{admissible-successor map} is
\[
  \Legal(v) =
  \begin{cases}
    \{v' : (v,v') \in E\} & v \in V_D,\\[2pt]
    \{v' : (v,v') \in E,\ q_A' \in F_A\} & v \in V_A.
  \end{cases}
\]
\end{definition}

\begin{remark}[Scope]
\label{rem:scope}
The framework as developed here is explicit-state, turn-based, and
deterministic: $\delta$ is a function, not a distribution, and both players
observe the product state fully. \Cref{sec:discussion} examines what these
assumptions imply for computational scope and extension.
\end{remark}

\subsection{The Defensibility Certificate}
\label{sec:certificate}

\begin{definition}[Defensibility Certificate]
\label{def:defensibility}
Let $\mathcal{C}$ be an asymmetrically constrained safety game with product
game $G_\times$ and winning region $\calW$. $\mathcal{C}$ is
\emph{defensible} iff $v_0 \in \calW$. The \emph{defensibility certificate}
for $\mathcal{C}$ is exactly one of (i)~$v_0 \in \calW$, and $\calW$ witnesses
a defender strategy satisfying $\phiD$ against every $\phiA$-admissible
adversary behavior, or (ii)~$v_0 \notin \calW$, and provably no such strategy
exists.
\end{definition}

The dichotomy is exhaustive because finite safety games are determined: from
every position, exactly one of the two players has a winning
strategy~\cite{gradel2002}. Determinacy is what licenses the negative
branch---indefensibility is a theorem about the game, not a failure to find a
strategy.

The shield---the table of safe actions per winning state---is a derived
artifact: a witness to the certificate's positive case
(\Cref{sec:extraction}). When positive, $\calW$ identifies exactly which
configurations admit a safe defense strategy. When negative, the framework
certifies that no defense exists from the initial configuration.

\subsection{Asymmetric Enforcement}
\label{sec:asym-general}

The two specifications serve different roles in the solution:

\begin{itemize}
    \item \textbf{$\phiD$ defines the safety objective.} A play is safe iff
    $\AD$ remains in an accepting state at every step. Product states with
    $q_D \notin F_D$ seed the attractor's initial unsafe set $U$.
    \item \textbf{$\phiA$ constrains the adversary's behavior.} The
    adversary's moves are filtered to those keeping $\AAtk$ in an accepting
    state, through $\Legal$ (\Cref{def:product}), during attractor
    computation.
\end{itemize}

Seeding adversary-violation states into $U$ is incorrect wherever the adversary
can eventually force $\AAtk$ out of $F_A$:
every adversary position then becomes a predecessor of $U$ and the attractor
absorbs the state space. The forcing need not be available in one step;
\Cref{sec:dual_spec} works the argument through for a consumable budget. The construction adopted here uses $\phiD$ to define what
must never happen and $\phiA$ to define what the adversary cannot do.
\begin{proposition}[Monotonicity of the winning region under adversary restriction]
\label{prop:refinement}
Let $E(v)$ denote the successor set of $v$ in $G_\times$ and let $\Legal(v)$ be
the admissible-successor map of \Cref{def:product}, so that
$\Legal(v) = E(v)$ for every $v \in V_D$ and
$\Legal(v) \subseteq E(v)$ for every $v \in V_A$. Let $\Attr^{*}_{E}$ be the
adversary attractor over $G_\times$ computed with successor map $E$, and let
$\Attr^{*}$ be the one computed with $\Legal$, as everywhere else in this
paper; write $\calW_{E} = V \setminus \Attr^{*}_{E}$ and
$\calW = V \setminus \Attr^{*}$. Then
\[
  \Attr^{*} \subseteq \Attr^{*}_{E}, \qquad\text{hence}\qquad
  \calW_{E} \subseteq \calW.
\]
Constraining the adversary can only enlarge or preserve the defender's winning
region; it can never shrink it.
\end{proposition}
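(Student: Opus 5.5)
We prove the inclusion $\Attr^{*}\subseteq\Attr^{*}_{E}$ by induction on the iterates of the attractor fixed-point computation. Write $\Attr_k$ for the iterates computed with $\Legal$ and $\Attr_k^{E}$ for those computed with $E$. Both start from the same seed $\Attr_0=\Attr_0^{E}=U$, since $U$ depends only on $\phiD$ and not on the successor map. The claim is the stagewise inclusion $\Attr_k\subseteq\Attr^{E}_k$ for every $k$. Both sequences are increasing and stabilise on a finite $V$, so the inclusion passes to the least fixed points and gives $\Attr^{*}\subseteq\Attr^{*}_{E}$. Taking complements in $V$ then yields $\calW_{E}\subseteq\calW$.

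For the inductive step, assume $\Attr_k\subseteq\Attr^{E}_k$ and take $v\in\Attr_{k+1}\setminus\Attr_k$. There are two cases, according to the owner of $v$.

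\textbf{Adversary positions.} Let $v\in V_A$. Some $v'\in\Legal(v)$ lies in $\Attr_k$. Since $\Legal(v)\subseteq E(v)$ and $\Attr_k\subseteq\Attr^{E}_k$, this $v'$ is an $E$-successor of $v$ lying in $\Attr^{E}_k$. So $v\in\Attr^{E}_{k+1}$, or $v$ was already in $\Attr^{E}_k$.

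\textbf{Defender positions.} Let $v\in V_D$. Every $v'\in\Legal(v)=E(v)$ lies in $\Attr_k\subseteq\Attr^{E}_k$. So every $E$-successor of $v$ lies in $\Attr^{E}_k$, and hence $v\in\Attr^{E}_{k+1}$.

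The step is cleanest if the one-step operator is written as a single monotone map: $\mathrm{Pre}_\Lambda(X)=\{v\in V_A:\Lambda(v)\cap X\neq\emptyset\}\cup\{v\in V_D:\Lambda(v)\subseteq X\}$. This map is monotone in $X$. It is also pointwise dominated by $\mathrm{Pre}_E$, because enlarging adversary successor sets only helps the existential clause, and defender successor sets are unchanged.

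The only real obstacle is a subtlety at adversary positions where $\Legal(v)=\emptyset$, i.e.\ every adversary move is inadmissible. The definition of a safety game assumes every position has an outgoing move, but $\Legal$ may break this. The existential clause fails at such $v$ under $\Legal$, so $v$ never enters $\Attr$ through that clause. That failure only makes $\Attr$ smaller, so it is consistent with the inclusion we want and does not endanger it. I would remark on this explicitly so the argument is not read as relying on non-blocking.

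Finally, the attractors are characterised as complements of the winning regions (\Cref{def:defensibility} and the determinacy remark). The set-level inclusion therefore transfers to the semantic statement: every position that is defender-winning against unrestricted adversaries remains winning against $\phiA$-admissible ones.
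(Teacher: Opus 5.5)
Your proposal is correct and follows essentially the same route as the paper: a stagewise induction $\Attr_k \subseteq \Attr^{E}_k$ from the common seed $U$, using $\Legal(v)\subseteq E(v)$ for the existential clause at adversary positions and $\Legal(v)=E(v)$ for the universal clause at defender positions, then stabilisation on finite $V$ and complementation. Your monotone $\mathrm{Pre}$ formulation and your remark on adversary positions with $\Legal(v)=\emptyset$ refine that same argument and do not constitute a different proof.
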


\begin{proof}
By induction on the iteration index $k$. At $k = 0$ both iterates equal $U$.
Assume $\Attr_{k} \subseteq \Attr_{E,k}$ and let
$v \in \Attr_{k+1}$. If $v \in V_A$, then some
$v' \in \Legal(v) \subseteq E(v)$ lies in
$\Attr_{k} \subseteq \Attr_{E,k}$, so $v \in \Attr_{E,k+1}$. If $v \in V_D$,
then $E(v)$ is identical in both games and every successor lies in
$\Attr_{k} \subseteq \Attr_{E,k}$, so again $v \in \Attr_{E,k+1}$. The
inclusion therefore holds at every $k$, and finiteness of $V$ gives
convergence of both iterations. Complementing yields
$\calW_{E} \subseteq \calW$.
\end{proof}

\begin{remark}[Direction of the asymmetry]
\label{rem:refinement}
The result depends on restricting \emph{only} adversary edges. Removing a
defender edge would weaken the universal condition at defender vertices and
could \emph{enlarge} the attractor, shrinking the winning region.
\end{remark}

\paragraph{Comparison with related architectures.}
Reactive synthesis with environment assumptions~\cite{chatterjee2008,piterman2006,bloem2012}
wires environment behavior into the synthesis objective as an implication:
\emph{env satisfies $A$ $\to$ sys satisfies $G$}. The two specifications
collapse into a single synthesis target. Assumptions in that setting may
themselves be derived by pruning environment edges~\cite{chatterjee2008}, but
the assumption is discharged inside one synthesis objective rather than
standing as a second, externally supplied specification that filters the
adversary's successors at every step of the fixed point. Multi-agent shielding~\cite{elsayedaly2021,xiao2023} shields
cooperative agents under a shared specification. The architecture introduced
here makes the two specifications independently manipulable analytical
variables. $\phiD$ seeds the unsafe set; $\phiA$ filters the adversary's
successors during attractor computation. Each automaton therefore controls a
different stage of the fixed-point iteration, supporting certificate and
sensitivity analyses with independently varied safety requirements and threat
boundaries.

\subsection{Solving the Game}
\label{sec:solving}

Algorithm~\ref{alg:attractor} realizes asymmetric adversary enforcement directly
in the predecessor operators: adversary predecessors are evaluated over the
$\phiA$-legal successor relation, while defender predecessors retain the
universal safety condition over the game.

\begin{algorithm}[t]
\caption{Attractor Computation with Constrained Adversary}
\label{alg:attractor}
\begin{algorithmic}[1]
\Require Product game $G_\times$, unsafe set $U = \{v : q_D \notin F_D\}$
\State $i \gets 0$;\quad $\Attr_0 \gets U$
\Repeat
    \State $\Attr_{i+1} \gets \Attr_i$ \Comment{each pass reads $\Attr_i$ only}
    \For{each adversary vertex $v \in V_A \setminus \Attr_i$}
        \If{$\exists\, v' \in \Legal(v) \cap \Attr_i$} \Comment{$\exists$-quantifier}
            \State $\Attr_{i+1} \gets \Attr_{i+1} \cup \{v\}$
        \EndIf
    \EndFor
    \For{each defender vertex $v \in V_D \setminus \Attr_i$}
        \If{$\forall\, v' \in \Legal(v),\ v' \in \Attr_i$} \Comment{$\forall$-quantifier}
            \State $\Attr_{i+1} \gets \Attr_{i+1} \cup \{v\}$
        \EndIf
    \EndFor
    \State $i \gets i + 1$
\Until{$\Attr_i = \Attr_{i-1}$}
\State $\Attr^{*} \gets \Attr_i$
\State \Return $\calW \gets V \setminus \Attr^{*}$
\end{algorithmic}
\end{algorithm}

The quantifiers differ by vertex ownership. An adversary vertex enters the
attractor if \emph{any} admissible successor is already there: the adversary
needs only one path to danger. A defender vertex enters only if \emph{every}
successor is in the attractor: the defender is lost only when every option
leads into the attractor.

The filter can leave $\Legal(v) = \emptyset$ at an adversary vertex all of whose
moves are inadmissible---in the instantiation of \Cref{sec:instantiation}, at
every adversary position whose attacker automaton already sits in its rejecting
sink, which is one step beyond a spent budget: with the budget spent but $\AAtk$
still accepting, every non-\textsc{Destroy} move remains admissible. The
existential test
then fails vacuously and the vertex is winning, which is the intended reading: an
adversary with no $\phiA$-admissible move can force nothing. Defender vertices
retain their full successor set, so the assumption that every position has an
outgoing move is needed only there. Such positions are unreachable from $v_0$,
since $\Legal$ removes the very edges that would enter them.

\subsection{Shield Extraction}
\label{sec:extraction}

For each product state $v \in \calW$, the shield computes the set of safe
actions:
\begin{equation}
\Safe(v) = \begin{cases}
\{a : \Succ(v,a) \notin \Attr^*\} & \text{if } v \in V_D \\[2pt]
\{a : \Succ(v,a) \notin \Attr^* \;\wedge\; \Succ(v,a) \in \Legal(v)\} & \text{if } v \in V_A
\end{cases}
\end{equation}

Since $U \subseteq \Attr^*$, a defender move into a $\phiD$-violating state is
excluded by the attractor test alone. The adversary case carries the second
condition because the existential test at adversary vertices ranges over
$\Legal(v)$: a $\phiA$-violating successor is neither seeded into $U$ nor
examined during iteration, so nothing places it in $\Attr^*$.

Here the shield defines the \emph{arena} within which two adversarial agents
compete. Neither agent
can leave the winning region, and the adversary cannot leave $\AAtk$'s
accepting region. The MARL analysis of \Cref{sec:marl} takes place entirely
within this bounded arena. Every trajectory observed during training therefore
satisfies $\phiD$ throughout, and satisfies $\phiA$ within each engagement
segment; where an instantiation resets engagements (\Cref{sec:respawn}), the
concatenation of segments is read under the renewal interpretation rather than
as one $\phiA$-satisfying play.

\subsection{The Two-Layer Methodology}
\label{sec:two-layer}

The certificate is binary, but defensibility admits degrees. The framework
therefore attaches a second layer to the first. The \emph{static layer}
comprises deterministic functionals of the formal structure: the winning
region, the attractor shell decomposition, and metrics derived from them.
The \emph{adaptive layer} confines two adversarial reinforcement learners to
$\calW$ and measures the operating quality attained under the declared
protocol. The two layers interrogate different
aspects of defensibility---existence of a defense, and the operational quality
attained inside it---and the metrics of \Cref{sec:metrics}, composed across both, form
the defensibility fingerprint.

\section{Shielded Adversarial MARL}
\label{sec:marl}

The solved game is the primary analytical object: $\calW$ witnesses the
certificate, and the derived shield constrains the arena in which two
adversarial learners estimate operational quality.

\subsection{Zero-Sum Reward}

The reward must produce adversarial engagement inside the shield's constraints.
Within $\calW$ no trajectory reaches a $\phiD$-violating state by construction,
so any reward keyed to violation is structurally vacuous and yields no learning
signal. The dominance function $\mu$ of \Cref{def:casg} already scores states on
a defender-favorability scale; rescaling it to a symmetric range gives a
zero-sum objective:
\begin{equation}
R_D(s) = 2\mu(s) - 1, \qquad R_A(s) = -R_D(s), \qquad R_D, R_A \in [-1, +1].
\end{equation}
The defender receives positive reward exactly when $\mu(s) > 1/2$ and
continuous negative reward below it, creating persistent pressure to restore
favorable states rather than accept a degraded operating condition; the adversary is
symmetrically motivated to drive $\mu$ below $1/2$ and hold it there. Because
$R_D$ is an affine image of $\mu$, the reward the learners optimize and the
functional through which their play is later measured are the same object up to
scale.

\subsection{Minimax Q-Learning with Shield Constraints}

The learners play on $G_\times$. We write $v = (s, q_D, q_A)$ for a product state
and $s$ for its $\mathcal{M}$-component, so that $\Safe(v)$ and $\mu(s)$ are both
defined along a play. The two layers read that state at different resolutions.
The shield is evaluated on the full product state, whereas each learner reads
only the \emph{observation}
\begin{equation}
\label{eq:obsmap}
o(v) = s .
\end{equation}
Positions that differ only in automaton state therefore share Q-values but may
admit different actions. The temporal constraints are already compiled into
$\Safe(\cdot)$. Two separate
Q-tables $Q_A$ and $Q_D$ over $o(v)$ are maintained and updated by tabular
Q-learning~\cite{watkins1992,sutton2018} with cross-table minimax lookup:
\begin{align}
Q_A(o(v),a) &\leftarrow Q_A(o(v),a) + \alpha\bigl[R_A + \gamma\bigl(-\max_{a' \in \Safe(v')} Q_D(o(v'),a')\bigr) - Q_A(o(v),a)\bigr] \\
Q_D(o(v),a) &\leftarrow Q_D(o(v),a) + \alpha\bigl[R_D + \gamma\bigl(-\max_{a' \in \Safe(v')} Q_A(o(v'),a')\bigr) - Q_D(o(v),a)\bigr]
\end{align}

The $\Safe(\cdot)$ constraint applies to both the minimax lookahead and to
action selection, and no further filtering is applied at either player.
$\Safe(v)$ already carries the admissibility test at adversary states, where
its definition ranges over $\Legal(v)$ (\Cref{sec:extraction}); at defender
states the attractor test alone suffices, since $\phiD$ violations seed
$\Attr_0$ (\Cref{sec:asym-general}).

\paragraph{The adaptive metric measures play against an adversarially trained opponent.}
The turn-based update of \Cref{sec:prelim} evaluates every successor under the
assumption that the opponent acts to minimize the updating player's value. The
dominance the metric reports is therefore sustained against an adversarially
trained, reward-seeking opponent rather than an average-case or fixed one. This
holds the adaptive layer to the threat model of the static layer: the attractor
computation certifies that a safe defender strategy exists against every
$\phiA$-admissible adversary, and the adaptive metric quantifies the dominance the
defender sustains when that same adversary is also adaptive and reward-seeking,
with both learners confined to $\calW$. The two layers interrogate one adversary
model at two resolutions: whether a safe strategy exists, and the operational
quality of that strategy under adaptive pressure. Both the adversary model and
the arena are inherited from the automata layer.

\subsection{Engagement Resets}
\label{sec:respawn}

An engagement-scoped resource bound becomes exhausted early within a long
training episode. The remainder then measures an adversary that can no longer
act, making the result depend on episode length rather than the specification.

For instantiations in which $\phiA$ is intended to bind per engagement, we adopt
an \emph{engagement reset}: whenever play reaches a designated re-initialization
set $S_{\mathrm{init}} \subseteq S$, then with probability $p$ the adversary
returns to its entry configuration and the automata $\AD$ and $\AAtk$ to their
initial states, beginning a new engagement. Reaching $S_{\mathrm{init}}$ is the
trigger for the reset, not its result. The operation is external to $\delta$: it
is a sampling device over repeated engagements, not a move of $\mathcal{M}$. What
the adaptive layer therefore measures is a renewal process, formed by
concatenating legal engagement segments of the same certified arena, and $\phiA$
is read as a per-engagement constraint---it binds within one engagement, and a
reset begins another.

The renewal protocol is valid when initialized reset targets lie in $\calW$ and
$\phiD$ is preserved across reset boundaries. Both conditions hold for the
instantiation of \Cref{sec:instantiation}, whose safety clauses are predicates
of the current state. History-dependent defender specifications instead require
their automaton memory to persist across resets. The adaptive reading adds this
declared renewal protocol over the arena fixed by the certificate.

\section{Defensibility Metrics}
\label{sec:metrics}

Once a configuration is certified, four quantities grade its defensibility:
how much of the non-violating state space the defender can hold, how the danger
region is shaped, how much of the defender's action repertoire the shield
leaves available, and how well the defense holds when both parties adapt. Three
are functionals of the solved game---the attractor and the shield---and require
no play; we term these the \emph{static} metrics. The fourth is a functional of
the shielded learners of \Cref{sec:marl}, and we term it the \emph{adaptive}
metric. All four take values in $[0,1]$ and are oriented so that larger values
denote a more defensible system. Plotted on common axes they form the
\emph{defensibility fingerprint}.

\subsection{The Attractor Rank Decomposition}
\label{sec:shells}

For $v \in \Attr^{*}$, the \emph{attractor rank} $\operatorname{rank}(v)$ is the
least $n$ such that the adversary can force the play into $U$ within $n$ moves
against every defender strategy, with $\operatorname{rank}(v) = 0$ for
$v \in U$. Evaluated synchronously---each pass of \Cref{alg:attractor} reading
the previous iterate rather than the partially updated one---the $k$-th iterate
is $\Attr_k = \{v : \operatorname{rank}(v) \leq k\}$, so the \emph{shells}
\begin{equation}
\Sh_k = \Attr_k \setminus \Attr_{k-1}
    = \{v \in V : \operatorname{rank}(v) = k\},
\qquad k = 1, \ldots, k_{\max},
\end{equation}
partition the absorbed states by their distance, in forced moves, from
violation. The \emph{depth distribution} is
\begin{equation}
p_k = \frac{|\Sh_k|}{\sum_{j} |\Sh_j|}, \qquad k = 1, \ldots, k_{\max}.
\end{equation}
Because the iterate is read synchronously, the shell index equals the forced
distance to violation and does not depend on the order in which vertices are
visited.

\subsection{Winning Fraction}
\label{sec:win}

\begin{equation}
\mathrm{WIN} = \frac{|\calW|}{|V| - |U|} \;\in\; [0,1]
\end{equation}

$\mathrm{WIN}$ is the fraction of states outside $U$ from which the defender
can guarantee safety. $\mathrm{WIN} = 1$ when
every such state is winnable and $\mathrm{WIN} \to 0$ when the adversary can
force violation from almost everywhere outside $U$.

$\mathrm{WIN}$ is a structural fraction over the full product space, without
reachability or occupancy weighting. Restricting its domain to the
shielded-reachable set $\mathcal{R}$ would make the fraction identically $1$
because $\mathcal{R} \subseteq \calW$. The full-space convention therefore
includes unreachable winning positions, including adversary dead ends.

\subsection{Shell Steepness}
\label{sec:stp}

Let $H = -\sum_{k : |\Sh_k| > 0} p_k \ln p_k$ be the Shannon entropy of the depth
distribution. Shell steepness is the normalized entropy deficit,
\begin{equation}
\mathrm{STP} =
\begin{cases}
1 - \dfrac{H}{\ln k_{\max}} & k_{\max} \geq 2,\\[6pt]
1 & k_{\max} \leq 1,
\end{cases}
\qquad \mathrm{STP} \in [0,1].
\end{equation}

$\mathrm{STP}$ measures concentration in the attractor-rank distribution;
entropy is invariant to which ranks carry the mass. We therefore interpret it
for shallow-dominant attractors with fixed $k_{\max}$. All five configurations
satisfy these conditions: $k_{\max}=4$, the shallowest shell is largest, and
the first two shells contain $97.1$--$99.2\%$ of the absorbed non-initial mass
(\Cref{tab:shells}). Within this family, higher $\mathrm{STP}$ indicates greater
concentration near the violation boundary. $\mathrm{WIN}$ measures the total
winning-region extent.

\subsection{Shield Latitude}
\label{sec:slt}

Let $\mathcal{R} \subseteq V$ be the set of product states a shielded play can
enter, obtained by forward search from $v_0$ through permitted actions only.
Shield latitude is the fraction of the defender's action repertoire the shield
leaves available, averaged over the winning defender states in $\mathcal{R}$:
\begin{equation}
\mathrm{SLT} =
\frac{\displaystyle\sum_{v \,\in\, \calW \cap \mathcal{R} \cap V_D} |\Safe(v)|}
     {|\mathrm{Act}_D| \cdot |\calW \cap \mathcal{R} \cap V_D|}
\;\in\; [0,1].
\end{equation}

$\mathrm{SLT}=1$ when the defender may act freely throughout $\mathcal R$, and
$\mathrm{SLT}\to0$ when only a small fraction of defender actions remains permitted.
The average is taken over $\mathcal R$ to exclude unreachable product states.
Since $\mathcal R\subseteq\calW$, intersecting $\mathcal R$ with $\calW$ does
not change the metric.
$\mathcal{R}$ is also the domain the adaptive metric ranges over, so
$\mathrm{SLT}$ and the adaptive metric are evaluated on the same set of states.
Unlike the two
shell-based metrics, $\mathrm{SLT}$ is computed from the per-state
permitted-action sets rather than from the depth distribution, reading the
solved game at the granularity of individual edges.

\subsection{Defender Dominance}
\label{sec:ddr}

The three static metrics are functionals of $G_\times$ alone. The adaptive
metric, the defender dominance ratio $\mathrm{DDR}$, additionally reads the
dominance function $\mu : S \to [0,1]$ of $\mathcal{M}$ (\Cref{def:casg}).

\begin{definition}[Defender Dominance]
\label{def:ddr}
\begin{equation}
\mathrm{DDR} = \frac{1}{N_{\mathrm{run}} \cdot N_{\mathrm{rep}} \cdot T}
  \sum_{r=1}^{N_{\mathrm{run}}}\ \sum_{e = N_{\mathrm{ep}} - N_{\mathrm{rep}} + 1}^{N_{\mathrm{ep}}}\ \sum_{t=1}^{T} \mu\bigl(s^{(r)}_{e,t}\bigr),
\end{equation}
the mean of $\mu$ under the settled behavior of the two shielded learners of
\Cref{sec:marl}, taken over $N_{\mathrm{run}}$ independent training runs of
$N_{\mathrm{ep}}$ episodes each. Here $N_{\mathrm{rep}} \leq N_{\mathrm{ep}}$ is
the length of the \emph{reporting window}, the trailing episodes of a run over
which the operating point is read; $T$ is the number of steps per episode; and $s^{(r)}_{e,t}$ is the
$\mathcal{M}$-component of the product state visited at step $t$ of episode $e$
of run $r$. Each episode begins at $s_0$ with $\AD$ and $\AAtk$ in their initial
states, and \Cref{sec:respawn} governs re-initialization within an episode.
\end{definition}

$\mathrm{DDR}$ takes values in $[0,1]$. Every contributing trajectory satisfies
$\phiD$ throughout and $\phiA$ within each engagement segment
(\Cref{sec:respawn}), so $\mathrm{DDR}$ measures the quality of a
certified defense under adaptive pressure rather than the rate at which $\phiD$
is violated. $\mathrm{DDR}$ characterizes $\mathcal{C}$ under the declared protocol, with
learning confined to strategies that preserve the certified safety requirement.
The averaging window, the number of runs, and
the training hyperparameters are properties of the experimental protocol (\Cref{sec:experiments}).

\subsection{The Defensibility Fingerprint}
\label{sec:fingerprint}

The four axes plotted on a common radial scale form the \emph{defensibility
fingerprint} of a configuration--specification pair. Each axis is bounded in
$[0,1]$ with endpoints fixed by its definition, and the radius is the metric
value itself. Fingerprints of configurations evaluated under the same
instantiation contract---the same $\mu$, reward form and reset
protocol (\Cref{sec:demo-boundary})---therefore share axes and scale
and may be overlaid directly.

The fingerprint supports coordinate-wise and relational comparison. A
configuration that is larger on every axis is uniformly more defensible under
the shared instantiation contract. Cross-axis comparison identifies joint
movement, opposing movement, and order inversions under controlled changes.
Area or weighted-sum aggregation would discard these relationships.

\section{Instantiation: Network Topological Safety}
\label{sec:instantiation}

This section instantiates the framework of \Cref{sec:sa-framework} on network
topological safety: whether a given network segment, under a given threat
model, admits a defense strategy that keeps a stated safety property
invariant. The resulting product game contains $150{,}000$ states and is evaluated under
controlled architecture and specification changes. Each component of \Cref{def:casg} is fixed in
turn: the game structure becomes a network defense game over host-status
vectors (\Cref{def:ndg}); the defender specification $\phiD$ becomes a
topological invariant over those statuses; and the adversary specification
$\phiA$ becomes a bound on the attacker's destructive budget
(\Cref{sec:dual_spec}). From here the adversary is the \emph{attacker}, states
are host configurations, and actions are network operations.

\begin{figure}[t]
\centering
\begin{tikzpicture}[
    node distance=2.2cm,
    host/.style={draw, thick, rounded corners=3pt, minimum width=1.4cm, minimum height=0.8cm, font=\small\bfseries},
    gwstyle/.style={host, fill=red!15},
    webstyle/.style={host, fill=orange!15},
    wsstyle/.style={host, fill=blue!12},
    dbstyle/.style={host, fill=red!10},
    bkstyle/.style={host, fill=green!15},
    edge/.style={-{Stealth[length=2.5mm]}, thick},
    bypass/.style={-{Stealth[length=2.5mm]}, thick, dashed, color=red!70!black}
]
    \node[gwstyle] (gw) {GW};
    \node[webstyle, right=of gw] (web) {Web};
    \node[wsstyle, above right=0.8cm and 2cm of web] (ws) {WS};
    \node[dbstyle, below right=0.8cm and 2cm of web] (db) {DB};
    \node[bkstyle, right=of db] (bk) {BK};

    \draw[edge] (gw) -- node[above, font=\scriptsize] {FW1} (web);
    \draw[edge] (web) -- node[above left, font=\scriptsize] {FW2} (ws);
    \draw[edge] (web) -- node[below left, font=\scriptsize, pos=0.4] {\textit{p3306}} (db);
    \draw[edge] (ws) -- node[right, font=\scriptsize] {zone} (db);
    \draw[edge] (db) -- node[above, font=\scriptsize] {SSH} (bk);
    \draw[bypass] (gw) to[bend left=22] node[above, font=\scriptsize, color=red!70!black, pos=0.45] {VPN bypass} (ws);
\end{tikzpicture}
\caption{Reference topology. 5 nodes, 6 directed edges. Dashed red edge: VPN bypass path from the gateway to the workstation, representing a forgotten VPN configuration that lets the attacker bypass the web server entirely. GW is internet-facing (attacker entry point); DB and BK are the critical assets protected by $\phiD$.}
\label{fig:topology}
\Description{A node-link network diagram with five labeled host boxes: GW (the internet-facing gateway, on the left), Web (center), WS (workstation, upper right), DB (database, lower right) and BK (backup, far right). Solid directed arrows show allowed connectivity: GW to Web through firewall FW1, Web to WS through firewall FW2, Web to DB over port 3306, WS to DB across a zone boundary, and DB to BK over SSH. A separate dashed arrow curves directly from GW to WS, labeled VPN bypass, depicting a misconfiguration that skips the web server entirely. DB and BK are shaded to mark them as the critical assets the defender must protect.}
\end{figure}

\subsection{Network Model}
\label{sec:network}

\begin{definition}[Network Segment]
A network segment is a directed graph $\Net = (\calH, \Links)$ whose vertices $\calH$
are hosts and whose directed edges $\Links \subseteq \calH \times \calH$ are the
\emph{links} along which compromise can propagate: firewall rules, service
dependencies, or misconfigurations.
\end{definition}

The reference segment (\Cref{fig:topology}) has five hosts and six directed
edges. The GW\,$\to$\,WS edge is its structurally significant feature: a
single hop from the attacker's entry point to the interior, bypassing the web
server that the remaining edges route through.

Each host $h \in \calH$ carries a status from $\St = \{C, X, D, I, Z\}$:
Clean~(C)---no attacker presence; Compromised~(X)---attacker has access,
defender unaware; Detected~(D)---attacker has access, defender aware;
Isolated~(I)---disconnected by defender action; or Destroyed~(Z)---taken
offline by attacker action, recoverable by defender repair. The five statuses
follow the compromise--detection--response structure standard in intrusion
analysis~\cite{hutchins2011}: initial compromise, discovery by the defender,
and the two responses available to each side, with recovery possible from
both.

\begin{definition}[Network Defense Game]
\label{def:ndg}
The network defense game $\Mnet$ is the game structure $\mathcal{M}$ of
\Cref{def:casg} instantiated on $\Net$, with
\begin{itemize}
    \item $S = \St^{|\calH|} \times \{\turnD, \turnA\}$: host status vector
    paired with a turn indicator, the turn component supplying the partition
    $S_D = \St^{|\calH|} \times \{\turnD\}$ and
    $S_A = \St^{|\calH|} \times \{\turnA\}$;
    \item $s_0$: one host compromised, all others clean, attacker to move;
    \item $\mathrm{Act}_D = \{\textsc{Noop}, \textsc{Monitor}, \textsc{Isolate}, \textsc{Restore}, \textsc{Fix}\} \times \calH$;
    \item $\mathrm{Act}_A = \{\textsc{Noop}, \textsc{Spread}, \textsc{Destroy}\} \times \calH$;
    \item $\delta: S \times (\mathrm{Act}_D \cup \mathrm{Act}_A) \to S$: deterministic, alternating the turn;
    \item $\mu(s) = n_C(s)/|\calH|$, the fraction of clean hosts, where
    $n_C(s) = |\{h \in \calH : \mathrm{status}(h) = C\}|$.
\end{itemize}
\end{definition}

Under this choice of $\mu$ the zero-sum reward of \Cref{sec:marl} reduces to
$R_D(s) = 2\mu(s) - 1 = \bigl(n_C(s) - n_{\neg C}(s)\bigr)/|\calH|$, where
$n_{\neg C} = |\calH| - n_C$: the defender is rewarded for the margin of clean
hosts over unclean ones.

Each action targets one host. \textsc{Monitor} reveals whether a host is
compromised ($X \to D$). \textsc{Isolate} disconnects a host. \textsc{Restore}
reconnects an isolated host to clean status. \textsc{Fix} repairs a destroyed
host to clean. \textsc{Spread} propagates compromise to a clean target $h$,
succeeding iff there exists $u$ with $\mathrm{status}(u) \in \{X, D\}$ and
$(u,h) \in \Links$. \textsc{Destroy} takes a compromised or detected host offline,
recoverable via \textsc{Fix}. Both players have a \textsc{Noop}.
\Cref{tab:instance} gives the full transition semantics.

All $|\mathrm{Act}_D| = 25$ defender actions and $|\mathrm{Act}_A| = 15$
attacker actions are selectable at every state. Whether an action changes the state is a property of $\delta$ and the
current configuration, not of the action set; the uniform action space
simplifies the product construction and leaves the game graph total.

\begin{table}[t]
\centering
\caption{Left: hosts of the reference segment; DB and BK are the critical
assets named by $\phiD$ (\Cref{def:phid}). Right: transition semantics, each
cell giving the resulting host status when the action targets a host in that
status. $^*$\textsc{Spread} to Clean requires an adjacent compromised or
detected neighbor on a directed edge $(u,h) \in \Links$.}
\label{tab:instance}
\begin{minipage}[t]{0.34\textwidth}
\centering
\begin{tabular}{@{}ll@{}}
\toprule
Host & Role \\
\midrule
GW  & Gateway; entry point \\
Web & Web server \\
WS  & Workstation \\
DB  & Database \\
BK  & Backup \\
\bottomrule
\end{tabular}
\end{minipage}
\hfill
\begin{minipage}[t]{0.62\textwidth}
\centering
\begin{tabular}{@{}lccccc@{}}
\toprule
& C & X & D & I & Z \\
\midrule
\multicolumn{6}{l}{\textit{Defender Actions}} \\
\textsc{Monitor} & C & D & D & I & Z \\
\textsc{Isolate} & I & I & I & I & Z \\
\textsc{Restore} & C & X & D & C & Z \\
\textsc{Fix} & C & X & D & I & C \\
\midrule
\multicolumn{6}{l}{\textit{Attacker Actions}} \\
\textsc{Spread} & X$^*$ & X & D & I & Z \\
\textsc{Destroy} & C & Z & Z & I & Z \\
\bottomrule
\end{tabular}
\end{minipage}
\end{table}

\subsection{Dual Temporal Logic Specifications}
\label{sec:dual_spec}

We now fix the two specifications for this instance. $\phiD$ states a
topological invariant the defender must preserve; $\phiA$ bounds the
attacker's destructive budget, instantiating the asymmetric enforcement of
\Cref{sec:asym-general}.

\subsubsection{Defender Safety Specification}

\begin{definition}[Defender Specification $\phiD$]
\label{def:phid}
\begin{equation}
\phiD: \square\,\neg\bigl(\mathrm{DB} \in \{X,D,Z\} \;\wedge\; \mathrm{BK} \in \{X,D,Z\}\bigr) \;\wedge\; \square\,(\mathrm{active} \geq 3)
\end{equation}
where $\mathrm{active}(s) = |\{h \in \calH : \mathrm{status}(h) \notin \{I,Z\}\}|$.
\end{definition}

In plain terms: (1)~the database and backup server are never simultaneously in a compromised, detected, or destroyed state, and (2)~at least three hosts remain operational at all times. This is a conjunction of two safety properties, each compiled into a separate DFA:

\begin{itemize}
    \item $\AD^{(1)}$: tracks how many of $\{\mathrm{DB}, \mathrm{BK}\}$ are in a ``bad'' state ($\{X, D, Z\}$). Three states: $q_0$ (neither bad), $q_1$ (exactly one bad), $\mathrm{viol}$ (both bad, absorbing). Accepts in $\{q_0, q_1\}$.
    \item $\AD^{(2)}$: checks whether the active host count falls below 3. Two states: $\mathrm{safe}$ and $\mathrm{viol}$. Accepts in $\{\mathrm{safe}\}$.
\end{itemize}

The product $\AD = \AD^{(1)} \otimes \AD^{(2)}$ has $|Q_D| = 6$ states with $|F_D| = 2$ accepting states: $(q_0, \mathrm{safe})$ and $(q_1, \mathrm{safe})$. The remaining four states represent violation of at least one safety clause.

\begin{figure}[ht]
\centering
\begin{minipage}{0.60\textwidth}
\centering
\begin{tikzpicture}[dfa]
  \node[state, initial, initial text={}, accepting] (q0) {$q_0$};
  \node[state, accepting, right=of q0]              (q1) {$q_1$};
  \node[state, viol, right=of q1]                   (qv) {$\mathit{viol}$};

  \path[->]
    (q0) edge[loop above]    node {$0$}        (q0)
    (q0) edge[bend left=18]  node[above] {$1$} (q1)
    (q1) edge[bend left=18]  node[below] {$0$} (q0)
    (q1) edge[loop above]    node {$1$}        (q1)
    (q1) edge                node[above] {$2$} (qv)
    (q0) edge[bend right=32] node[below] {$2$} (qv)
    (qv) edge[loop above]    node {$\top$}    (qv);
\end{tikzpicture}\\[2pt]
\textbf{(a)}~$\mathcal{A}_D^{(1)}$
\end{minipage}
\hfill
\begin{minipage}{0.36\textwidth}
\centering
\begin{tikzpicture}[dfa, node distance=3.1cm]
  \node[state, initial, initial text={}, accepting] (sf) {$\mathit{safe}$};
  \node[state, viol, right=of sf]                    (vv) {$\mathit{viol}$};

  \path[->]
    (sf) edge[loop above] node {$\mathrm{active}\!\ge\!3$} (sf)
    (sf) edge             node[above] {$\mathrm{active}\!<\!3$} (vv)
    (vv) edge[loop above] node {$\top$} (vv);
\end{tikzpicture}\\[2pt]
\textbf{(b)}~$\mathcal{A}_D^{(2)}$
\end{minipage}
\caption{The two defender safety automata composing
$\mathcal{A}_D = \mathcal{A}_D^{(1)} \otimes \mathcal{A}_D^{(2)}$ for $\varphi_D$
(Definition~\ref{def:phid}). \textbf{(a)} reads the number of critical assets
$\{\mathrm{DB},\mathrm{BK}\}$ currently bad ($\in\{X,D,Z\}$): $q_0$ none, $q_1$
exactly one, $\mathit{viol}$ both. \textbf{(b)} reads whether at least three hosts
remain active. Double circles are accepting; shaded circles are absorbing
violation sinks ($\top$ $=$ any input).}
\label{fig:dfa-defender}
\Description{Two small state-transition diagrams side by side, each a deterministic safety automaton. The left automaton has three states in a row: an initial accepting state for zero critical assets bad, a middle accepting state for exactly one, and a shaded absorbing non-accepting sink for both, with forward arrows advancing toward the sink as more assets go bad and a self-loop on the sink. The right automaton has two states: an initial accepting state labeled safe with a self-loop for at least three hosts active, and a shaded absorbing sink labeled viol reached by a single forward arrow when fewer than three remain active, with a self-loop on the sink. Accepting states are drawn as double circles; the sinks are reached irreversibly.}
\end{figure}

\subsubsection{Attacker Constraint Specification}

\begin{definition}[Attacker Specification $\phiA$]
\label{def:phia}
$\phiA$ admits exactly those attacker behaviors that select at most two
\textsc{Destroy} actions:
\begin{equation}
\phiA: \square\,\bigl(\mathrm{destroys} \leq 2\bigr),
\qquad
\mathrm{destroys} = \bigl|\{\,i : a_i \in \textsc{Destroy} \times \calH\,\}\bigr|,
\end{equation}
where $a_i$ is the $i$-th attacker action of the play.
\end{definition}

$\phiA$ compiles into the four-state counter $\AAtk$ of
\Cref{fig:dfa-attacker}, accepting in $\{q_0, q_1, q_2\}$. Each selected
\textsc{Destroy} action consumes one budget unit, including actions that leave
the host state unchanged.

Within the product game $\phiA$ binds over the whole play: $\AAtk$ has no
reset transition, and the certificate accordingly quantifies over attacker
behaviors that never select a third \textsc{Destroy}. Engagement structure
enters only at the adaptive layer. This instance takes the re-initialization
set $S_{\mathrm{init}}$ of \Cref{sec:respawn} to be the all-clean
configurations, with $p = 0.1$: when every host is clean the attacker respawns
at the gateway and both automata reset to their initial states, beginning a new
engagement. Training episodes are $1{,}000$ steps long and each already begins
afresh from $s_0$; without the within-episode reset, two destroys would be spent
in the opening moves and the rest of every episode would measure an attacker that
can no longer destroy, inflating the dominance the adaptive layer reports.

\begin{figure}[ht]
\centering
\begin{tikzpicture}[dfa, node distance=2.9cm]
  \node[state, initial, initial text={}, accepting] (a0) {$q_0$};
  \node[state, accepting, right=of a0]              (a1) {$q_1$};
  \node[state, accepting, right=of a1]              (a2) {$q_2$};
  \node[state, viol, right=of a2]                   (av) {$\mathit{viol}$};

  \path[->]
    (a0) edge[loop above] node {$\neg$\textsc{dst}}  (a0)
    (a0) edge             node[above] {\textsc{dst}} (a1)
    (a1) edge[loop above] node {$\neg$\textsc{dst}}  (a1)
    (a1) edge             node[above] {\textsc{dst}} (a2)
    (a2) edge[loop above] node {$\neg$\textsc{dst}}  (a2)
    (a2) edge             node[above] {\textsc{dst}} (av)
    (av) edge[loop above] node {$\top$}       (av);
\end{tikzpicture}
\caption{Attacker constraint automaton $\mathcal{A}_A$ for $\varphi_A$
(Definition~\ref{def:phia}): at most two Destroy actions over the play.
\textsc{dst} $=$ a \textsc{Destroy} action, $\neg$\textsc{dst} $=$ any other
attacker action, $\top$ $=$ any input; $q_i$ records $i$ Destroys spent. $\mathcal{A}_A$ advances on every Destroy the attacker
\emph{selects}---including no-op Destroys---so attempted destruction is priced like
successful destruction.}
\label{fig:dfa-attacker}
\Description{A single deterministic automaton drawn as four states in a horizontal row: an initial accepting state q0, two further accepting states q1 and q2, and a shaded absorbing non-accepting sink labeled viol. A forward arrow labeled DST advances from each state to the next, counting Destroy actions spent; every non-sink state also has a self-loop labeled not-DST for any other attacker action; the sink has a self-loop for any input. Reaching the sink after the third Destroy is irreversible. Accepting states are drawn as double circles.}
\end{figure}

\paragraph{Asymmetric enforcement in the instance.}
The two specifications instantiate the mechanism of \Cref{sec:asym-general}:
$\phiD$ seeds the unsafe set of the product game, while $\phiA$ filters the
attacker's successors during attractor computation. The asymmetry is concrete
here. Only $q_2$ has an immediate \textsc{Destroy} successor outside $F_A$, but
nothing prevents the attacker from selecting \textsc{Destroy} repeatedly, so
every attacker position can leave $F_A$ in finitely many moves. Seeding
attacker-violation states into $U$ would therefore make every attacker position
a predecessor of $U$, and the attractor would absorb the state space.

\subsection{Product Game Construction}

Paired with these specifications, the triple
$\mathcal{C}_{\mathrm{net}} = (\Mnet, \phiD, \phiA)$ is an asymmetrically
constrained safety game. Its product game tracks the network configuration,
the current turn, the defender's safety status, and the attacker's budget
status in a single state:
\begin{equation}
v = \bigl(\underbrace{\mathrm{status}(h_1), \ldots, \mathrm{status}(h_{|\calH|})}_{\text{host statuses}},\ \underbrace{\tau}_{\text{turn}},\ \underbrace{q_D}_{\AD},\ \underbrace{q_A}_{\AAtk}\bigr)
\end{equation}
The set of all such states is the product-game position set $V$ defined in
\Cref{def:product}. For
the reference segment ($|\calH| = 5$ hosts, $|\St| = 5$ statuses, $|Q_D| = 6$,
$|Q_A| = 4$):
\begin{equation}
|V| = |\St|^{|\calH|} \times 2 \times |Q_D| \times |Q_A|
    = 5^5 \times 2 \times 6 \times 4 = 150{,}000 .
\end{equation}
Of these, exactly $100{,}000$ carry $q_D \notin F_D$ and so violate $\phiD$
independently of any game dynamics; these form the initial unsafe set
$\Attr_0$. The remaining $50{,}000$ are partitioned by the attractor
computation into the winning region $\calW$ and the states absorbed during
fixed-point iteration.

Transitions are derived mechanically. In this instance $\AD$ reads the
host-status vector of the successor state and $\AAtk$ reads the action label,
advancing when and only when the attacker selects a \textsc{Destroy}. No
defender action is a \textsc{Destroy}, so $\AAtk$ is constant on defender
moves.

\subsubsection{A short play}
Five moves from the initial configuration. Each row shows the state
\emph{after} the indicated action: the host-status vector
$\langle \mathrm{GW},\mathrm{Web},\mathrm{WS},\mathrm{DB},\mathrm{BK}\rangle$,
the turn $\tau \in \{\turnD, \turnA\}$ (next to move), and the automaton states
$q_D = (q^{(1)},q^{(2)})$ and $q_A$.

\begin{center}
\small
\begin{tabular}{c l c c c c c}
\toprule
\# & Action & $\langle\text{GW,Web,WS,DB,BK}\rangle$ & $\tau$
   & $\mathcal{A}_D^{(1)}$ & $\mathcal{A}_D^{(2)}$ & $\mathcal{A}_A$ \\
\midrule
$0$ & \emph{initial state}            & $X\,C\,C\,C\,C$ & \turnA & $q_0$ & $\mathit{safe}$ & $q_0$ \\
$1$ & Attacker: \texttt{Spread(WS)}   & $X\,C\,X\,C\,C$ & \turnD & $q_0$ & $\mathit{safe}$ & $q_0$ \\
$2$ & Defender: \texttt{Monitor(WS)}  & $X\,C\,D\,C\,C$ & \turnA & $q_0$ & $\mathit{safe}$ & $q_0$ \\
$3$ & Attacker: \texttt{Spread(DB)}   & $X\,C\,D\,X\,C$ & \turnD & $\mathbf{q_1}$ & $\mathit{safe}$ & $q_0$ \\
$4$ & Defender: \texttt{Isolate(DB)}  & $X\,C\,D\,I\,C$ & \turnA & $\mathbf{q_0}$ & $\mathit{safe}$ & $q_0$ \\
$5$ & Attacker: \texttt{Destroy(WS)}  & $X\,C\,Z\,I\,C$ & \turnD & $q_0$ & $\mathit{safe}$ & $\mathbf{q_1}$ \\
\bottomrule
\end{tabular}
\end{center}

\noindent
Move~1 uses the GW\,$\to$\,WS bypass, reaching the interior without touching
Web. Move~3 puts one critical asset in a bad state, advancing
$\mathcal{A}_D^{(1)}$ to $q_1$; the state remains accepting, since $\phiD$
requires \emph{both} critical assets to be bad before it is violated. Move~4
returns DB to the good set and $\mathcal{A}_D^{(1)}$ to $q_0$, exercising the
recovery edge of \Cref{fig:dfa-defender}a. Move~5 spends one unit of attacker
budget: $\mathcal{A}_A$ advances to $q_1$ while $\mathcal{A}_D^{(2)}$ stays
$\mathit{safe}$, three hosts remaining active.

\subsection{Certificate for the Reference Configuration}
\label{sec:instance-certificate}

For the reference configuration, the attractor converges in four iterations.
\begin{center}
\begin{tabular}{@{}lrr@{}}
\toprule
Iteration & $|\Attr_k|$ & Shell $|\Sh_k|$ \\
\midrule
$\Attr_0$ (initial violations) & 100{,}000 & --- \\
$\Attr_1$ & 119{,}238 & 19{,}238 \\
$\Attr_2$ & 125{,}630 & 6{,}392 \\
$\Attr_3$ & 125{,}886 & 256 \\
$\Attr_4$ (fixed point) & 126{,}270 & 384 \\
\midrule
Winning region $|\calW|$ & 23{,}730 & (15.82\% of $|V|$) \\
\bottomrule
\end{tabular}
\end{center}

The initial product state $v_0 \in \calW$: the reference segment is
defensible under $\phiD$ against every $\phiA$-admissible attacker. The shell
sizes are read in \Cref{sec:experiments}, where the perturbed configurations
supply a basis for comparison.

\subsection{The Baseline Fingerprint}
\label{sec:baseline-fingerprint}
The four metrics of \Cref{sec:metrics}, evaluated on the solved reference game,
compose into the defensibility fingerprint of \Cref{fig:fingerprint}: the three
static axes read from the fixed point and shield, the adaptive axis from
shielded play within $\calW$ (values in \Cref{tab:results}). The absolute
construction is fixed in \Cref{sec:fingerprint}. This is the per-configuration
object that \Cref{sec:experiments} places side by side across the five
configurations.

\begin{figure}[ht]
\centering
\includegraphics[width=0.65\columnwidth]{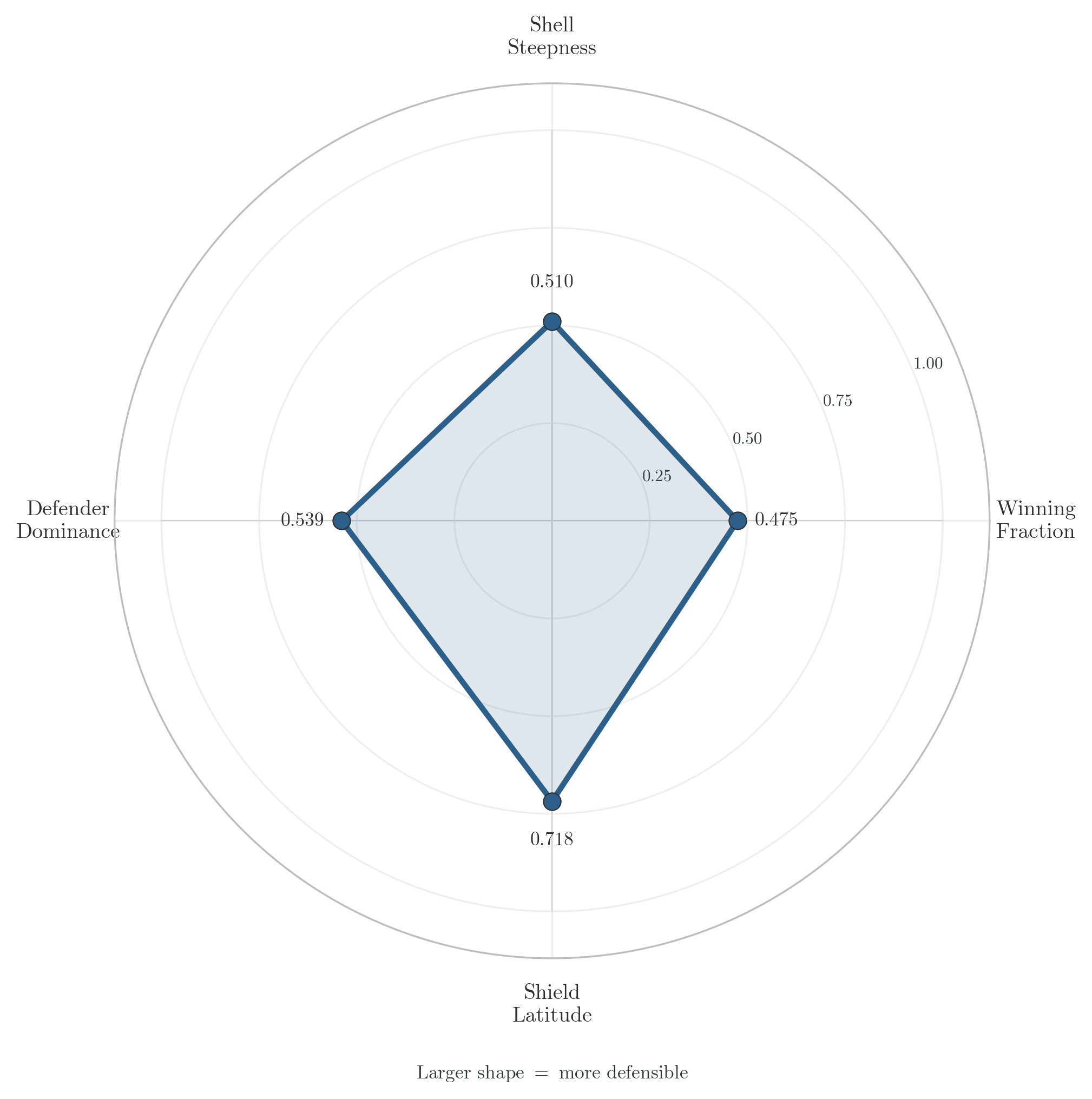}
\caption{Defensibility fingerprint of the baseline configuration (Case~1) on the
four axes of \Cref{sec:metrics}. Each axis spans $[0,1]$ on an absolute scale
with the rim at $1$; no cross-case rescaling is applied. Every axis is oriented
so that larger values indicate greater defensibility; distance from the center
gives the metric value and the polygon records the coordinate-wise profile
(\Cref{sec:fingerprint}).}
\label{fig:fingerprint}
\Description{A radar chart with four axes arranged around a center, one per defensibility metric: winning fraction, shell steepness, shield latitude and defender dominance. A single closed polygon connects the baseline configuration's value on each axis. Every axis runs from zero at the center to one at the rim on the same absolute scale; distance from the center gives each metric value, and the polygon records the coordinate-wise profile.}
\end{figure}

\section{Controlled Analysis of Network Defensibility}
\label{sec:experiments}

We evaluate the framework on the reference network and four controlled
perturbations. The explicit-state model permits complete evaluation of all five
configurations.

\subsection{Instance and Protocol}
\label{sec:whatif}

The reference configuration is perturbed in four ways (\Cref{tab:cases}). Two
perturbations alter the arena and leave both specifications fixed; two alter a
specification and leave the arena fixed. Separating the classes allows each
response to be attributed to one or the other. Both arena perturbations preserve
the adversary's ability to reach every host.

\begin{table}[ht]
\centering
\caption{The five configurations.}
\label{tab:cases}
\begin{tabular}{@{}clp{6.4cm}@{}}
\toprule
Case & Perturbs & Description \\
\midrule
1 & --- & Reference: $\phiD$ = no dual-critical-asset failure $\wedge$ active $\geq 3$; $\phiA$ = at most two destroys \\
2 & Arena & Fully connected: bidirectional edges between all non-BK hosts, BK reachable only from DB. Directed edges rise from 6 to 13. \\
3 & Spec ($\phiA$) & Unlimited \textsc{Destroy}: no attacker action is charged against the budget, so $\AAtk$ never leaves $q_0$ and $\Legal(v) = E(v)$ at every attacker vertex a play can reach. The counter automaton is retained in the product, so $|V|$ remains 150{,}000 and the case stays comparable state-for-state with the reference. \\
4 & Spec ($\phiD$) & Availability clause relaxed to active $\geq 2$ \\
5 & Arena & VPN bypass edge removed \\
\bottomrule
\end{tabular}
\end{table}

The static readings are functionals of the solved game and are deterministic.
The adaptive reading is trained under \Cref{tab:hyperparams} on ten seeds and
reported over the final $200$ episodes (the L200 window) with a $95\%$ Student's
$t$ interval, $\bar{x} \pm t_{9,0.025}\,s/\sqrt{n}$. Configurations are
independent and run in parallel; one configuration completes in about ten
minutes on commodity hardware. \Cref{tab:results} collects the four readings for
all five configurations, and the remaining subsections draw on it throughout.

\begin{table}[b]
\centering
\caption{Training parameters.}
\label{tab:hyperparams}
\begin{tabular}{@{}llll@{}}
\toprule
$\alpha$ & 0.05 & Episodes & 3{,}000 \\
$\gamma$ & 0.95 & Steps per episode & 1{,}000 \\
$\varepsilon$ & $\max(0.1,\,0.5 - e/6000)$ & Engagement reset $p$ & 0.1 \\
$Q$ initialization & $5.0$, optimistic & Window $N_{\mathrm{rep}}$ & final 200 episodes \\
Runs $N_{\mathrm{run}}$ & 10 seeds & Episode start & $s_0$, both automata reset \\
\bottomrule
\end{tabular}
\end{table}

\begin{table}[t]
\centering
\caption{The four readings across the five configurations. All axes lie in
$[0,1]$, larger meaning more defensible. The static axes are exact; $\mathrm{DDR}$
is the L200 ten-seed mean
with its $95\%$ interval. $|\calW|$ is the winning-region size and W\% its share
of the $150{,}000$ product states.}
\label{tab:results}
\small
\setlength{\tabcolsep}{5pt}
\begin{tabular}{@{}cl cccc rr c@{}}
\toprule
Case & Description & $\mathrm{WIN}$ & $\mathrm{STP}$ & $\mathrm{SLT}$ & $\mathrm{DDR}$ & $|\calW|$ & W\% & 95\% CI \\
\midrule
1 & Reference             & 0.4746 & 0.5102 & 0.7180 & 0.539 & 23{,}730 & 15.82\% & [0.524,\ 0.553] \\
2 & Fully connected       & 0.4662 & 0.5125 & 0.7067 & 0.227 & 23{,}310 & 15.54\% & [0.216,\ 0.237] \\
3 & Unlimited destroys    & 0.4105 & 0.4676 & 0.4625 & 0.475 & 20{,}526 & 13.68\% & [0.472,\ 0.479] \\
4 & Active $\geq 2$       & 0.6793 & 0.6165 & 0.8180 & 0.779 & 33{,}966 & 22.64\% & [0.774,\ 0.783] \\
5 & No bypass edge        & 0.4772 & 0.5235 & 0.7327 & 0.807 & 23{,}858 & 15.91\% & [0.794,\ 0.821] \\
\bottomrule
\end{tabular}
\end{table}

\subsection{Certification and Operational Quality Separate}
\label{sec:demo-certificate}

All five configurations are certified defensible: the initial product state lies
in $\calW$ in each, so in each the defender can maintain $\phiD$ against every
behavior $\phiA$ admits. Yet two of the certified configurations, differing
only in their arenas and by $548$ of $150{,}000$ product states in their winning
regions, hold $22.7\%$ and $80.7\%$ of the network respectively once both
parties adapt. Behind identical verdicts and nearly identical winning regions
sit defenses of entirely different operational quality.

The pattern is general: existence of a safe strategy, and the operational quality
attained by shield-constrained adaptive play under a declared protocol, are
separate quantities, and systems indistinguishable by the first can lie at
opposite ends of the second. Certification establishes that a safe way of
inhabiting the game exists; the adaptive layer asks what operating condition the
certified region supports when it is probed. Certification supplies the
existence result exhaustively: because finite safety games are determined, a
negative verdict establishes the absence of a defender strategy rather than a
failure to find one.

The separation becomes observable because the framework evaluates both
quantities on the same solved game: the certificate and $\mathrm{WIN}$ from the
fixed point, $\mathrm{DDR}$ from adaptive play confined to the region the fixed
point certifies. Every trajectory behind the $22.7\%$ figure is safe by
construction; what has collapsed is not safety but the price of keeping it.

Each reading alone answers its own question. Producing them from one object is
what turns two answers into a comparison, and the rest of this section is a
sequence of such comparisons.

\subsection{Depth Localization of an Edge Perturbation}
\label{sec:demo-structure}

Deleting the bypass edge leaves the rank-1 and rank-2 shells unchanged at
$19{,}238$ and $6{,}392$ positions, respectively (\Cref{tab:shells}). The
rank-3 shell decreases from $256$ to $224$, and rank 4 from $384$ to $288$.
The edge therefore affects only forcing sequences of depth three or greater.

\begin{table}[t]
\centering
\caption{Attractor shells. $\Sh_k$ is the set of positions at rank $k$, i.e.\
positions from which the adversary can force violation in exactly $k$ moves.
$\Attr^{*}$ is the converged attractor.}
\label{tab:shells}
\small
\begin{tabular}{@{}cl rrrr rr@{}}
\toprule
Case & Description & $|\Sh_1|$ & $|\Sh_2|$ & $|\Sh_3|$ & $|\Sh_4|$ & $|\Attr^{*}|$ & $|\calW|$ \\
\midrule
1 & Reference          & 19{,}238 & 6{,}392 & 256 & 384 & 126{,}270 & 23{,}730 \\
2 & Fully connected    & 19{,}418 & 6{,}696 & 240 & 336 & 126{,}690 & 23{,}310 \\
3 & Unlimited destroys & 20{,}258 & 8{,}352 & 336 & 528 & 129{,}474 & 20{,}526 \\
4 & Active $\geq 2$    & 12{,}914 & 2{,}992 &  64 &  64 & 116{,}034 & 33{,}966 \\
5 & No bypass edge     & 19{,}238 & 6{,}392 & 224 & 288 & 126{,}142 & 23{,}858 \\
\bottomrule
\end{tabular}
\end{table}

Specification changes leave a different signature in the same decomposition.
Relaxing the availability clause (Case~4) removes $33\%$ of the rank-1 shell but
$80\%$ of the combined rank-3 and rank-4 mass, from $640$ positions to $128$:
the relaxation dissolves long forcing chains far faster than immediate exposure.
Two changes that a scalar margin records as ``the region moved'' act at opposite
ends of the depth axis.

In general, a change to a model has a depth profile as well as a magnitude: it
may create or destroy immediate hazards, or act only on long strategic
horizons, and interventions with similar aggregate effect can differ entirely in
where that effect falls. The profile is exposed here because the attractor is computed synchronously,
which makes each state's rank equal to the number of moves within which violation
can be forced---forced strategic depth under alternating play, not hop distance
in the network graph. Reading the fixed point shell by shell rather than as a set
then localizes any perturbation by depth without requiring an additional game
solution (\Cref{sec:shells}). \Cref{fig:decomposition} shows the resulting partition for
all five configurations.

\begin{figure}[ht]
\centering
\includegraphics[width=\textwidth]{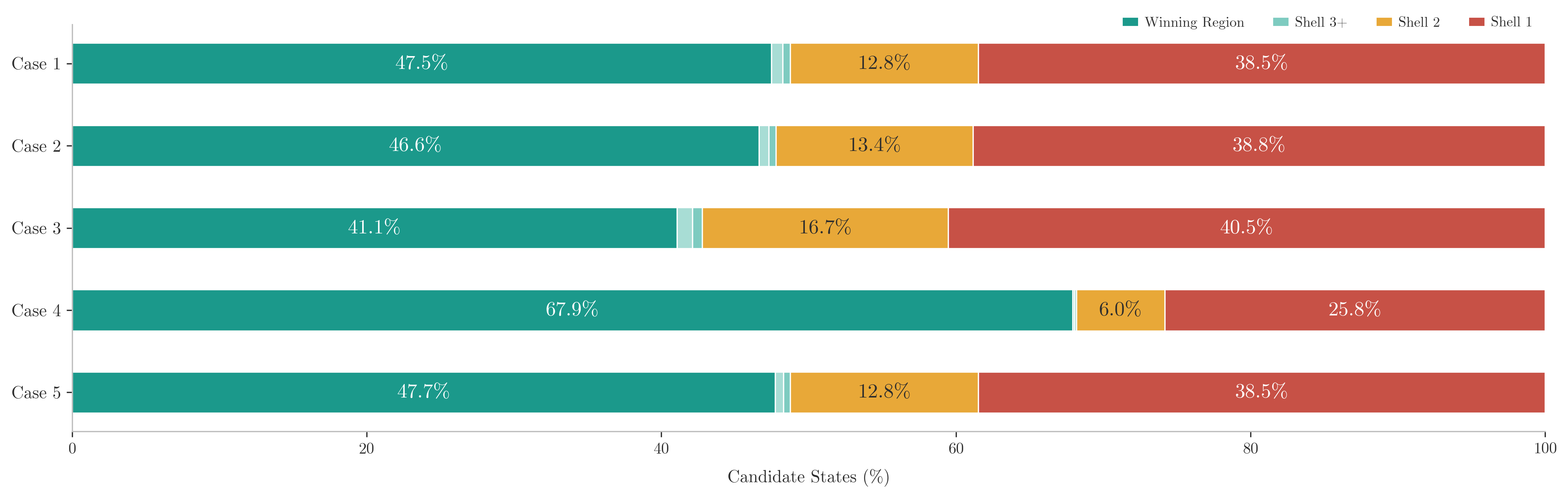}
\caption{Winning region and attractor shells as a partition of the
$|V| - |U| = 50{,}000$ states not already violating $\phiD$.}
\label{fig:decomposition}
\Description{A horizontal stacked bar chart with one bar per configuration, Cases 1 through 5, each bar spanning the fifty thousand product states that do not already violate the defender specification, normalized to 100 percent. Each bar is split into four coloured segments: the defender winning region, and the attractor shells at rank one, rank two, and rank three-or-more, with segment width proportional to the share of states in that region. Case 4, the relaxed availability clause, shows the widest winning-region segment and Case 3, unlimited destroys, the narrowest, while the rank-one shell dominates the absorbed mass in every case. The absolute winning-region size and its percentage of the full state space are annotated beside each bar.}
\end{figure}

\subsection{Operational Divergence under Similar Structural Margins}
\label{sec:demo-operation}

The two arena perturbations move every static reading by under $2.7\%$---mean
absolute relative movement $1.16\%$ for $\mathrm{WIN}$, $1.53\%$ for
$\mathrm{STP}$ and $1.81\%$ for $\mathrm{SLT}$---and move the operating point by
$58.0$ percentage points: $22.7\%$ of the network
held in the fully connected arena against $80.7\%$ with the bypass removed
(\Cref{fig:dominance}). Both cases remain close to the reference on all static
metrics but attain the lowest and highest $\mathrm{DDR}$ values among the five
configurations.

\begin{figure}[t]
\centering
\includegraphics[width=\textwidth]{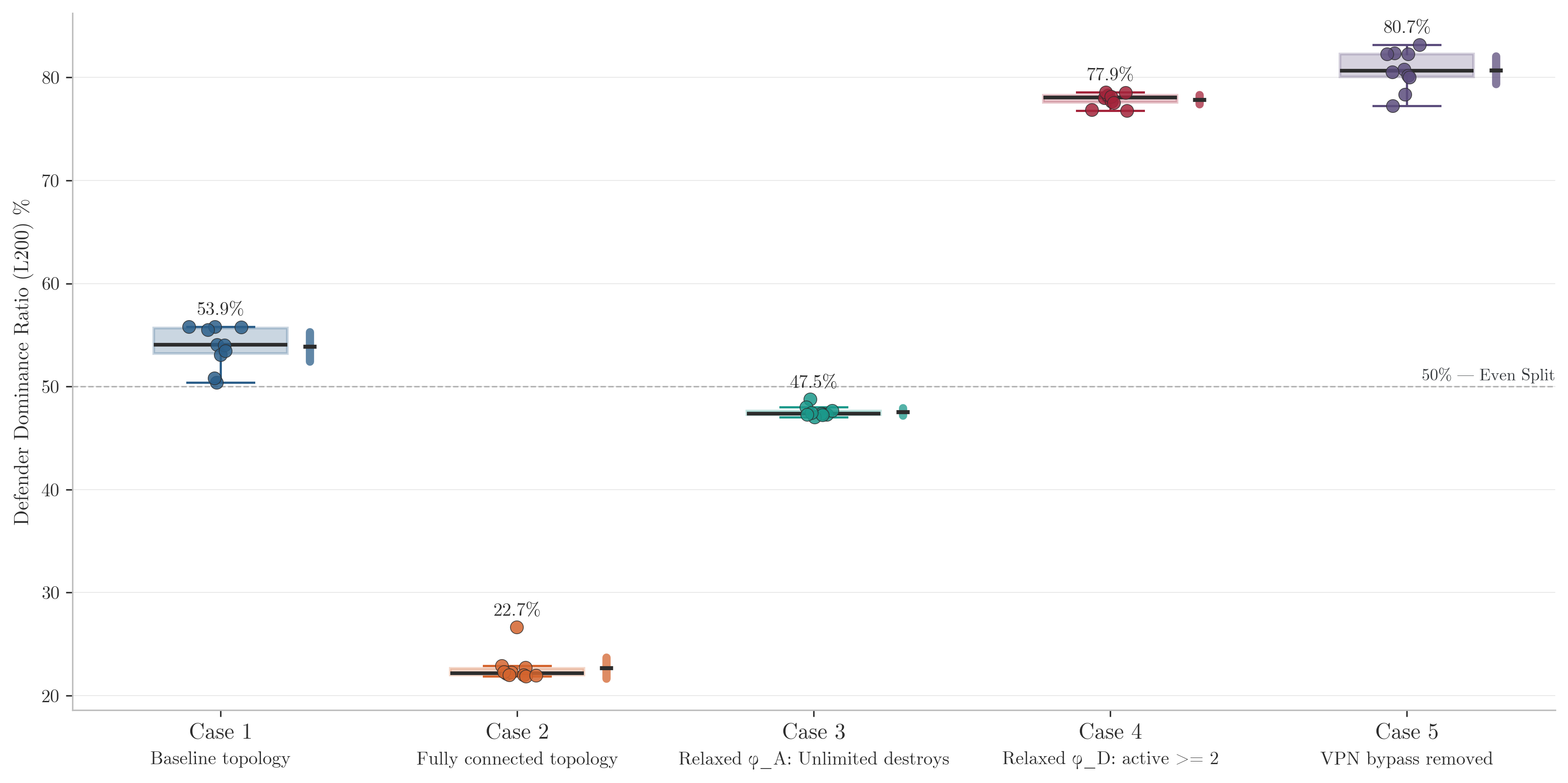}
\caption{Defender dominance over the L200 window. Each point is one seed
($n=10$); boxes show the interquartile range, the narrow box the $95\%$
interval. The dashed line marks an even split.}
\label{fig:dominance}
\Description{A box-and-whisker plot with one box per configuration, Cases 1 through 5, summarizing the defender dominance ratio over the final two hundred episodes across ten seeds, with the individual per-seed values overlaid as scattered points and a separate narrow box marking the 95 percent confidence interval. The vertical axis is the defender dominance ratio as a percentage, so a higher box means the defender holds more of the network; a dashed horizontal reference line at 50 percent marks an even split. The cases separate sharply: Case 2, fully connected, sits far below the even-split line near 23 percent, while Cases 4 and 5 sit well above it near 78 and 81 percent, and the tight clustering of points within each case reflects narrow inter-seed variation.}
\end{figure}

The static summaries move little while adaptive dominance changes sharply.
\Cref{sec:demo-structure} also locates the bypass deletion in the deep attractor
shells. Defender actions are independent of the edge relation, whereas
\textsc{Spread} is edge-gated (\Cref{def:ndg}). Reachability-preserving edge
changes therefore primarily affect compromise propagation time rather than the
existence of defensive responses. In absolute units the mean movement per static
axis across the two arena cases is $0.006$--$0.013$, against $0.290$ for
dominance.

The gap is measurable because the two layers read different objects of the same
game: the fixed point reads its winning condition, adaptive play reads the
dominance function over trajectories inside $\calW$. \Cref{fig:trajectories}
shows those trajectories---the per-episode mean of $\mu$ itself, the quantity
$\mathrm{DDR}$ averages. Every curve is flat inside the reporting window, and
the five differ in shape as well as level; both facts are used in
\Cref{sec:demo-trajectory}. Together, the certificate and $\mathrm{DDR}$ show
that both configurations are defensible but differ fourfold in holding burden.
Writing the holding burden as
$B_{\mathrm{hold}} = 1 - \mathrm{DDR}$, the share of the network the defender
does not hold, the two sit at $0.773$ and $0.193$---a ratio of $4.00$.

\begin{figure}[ht]
\centering
\includegraphics[width=\textwidth]{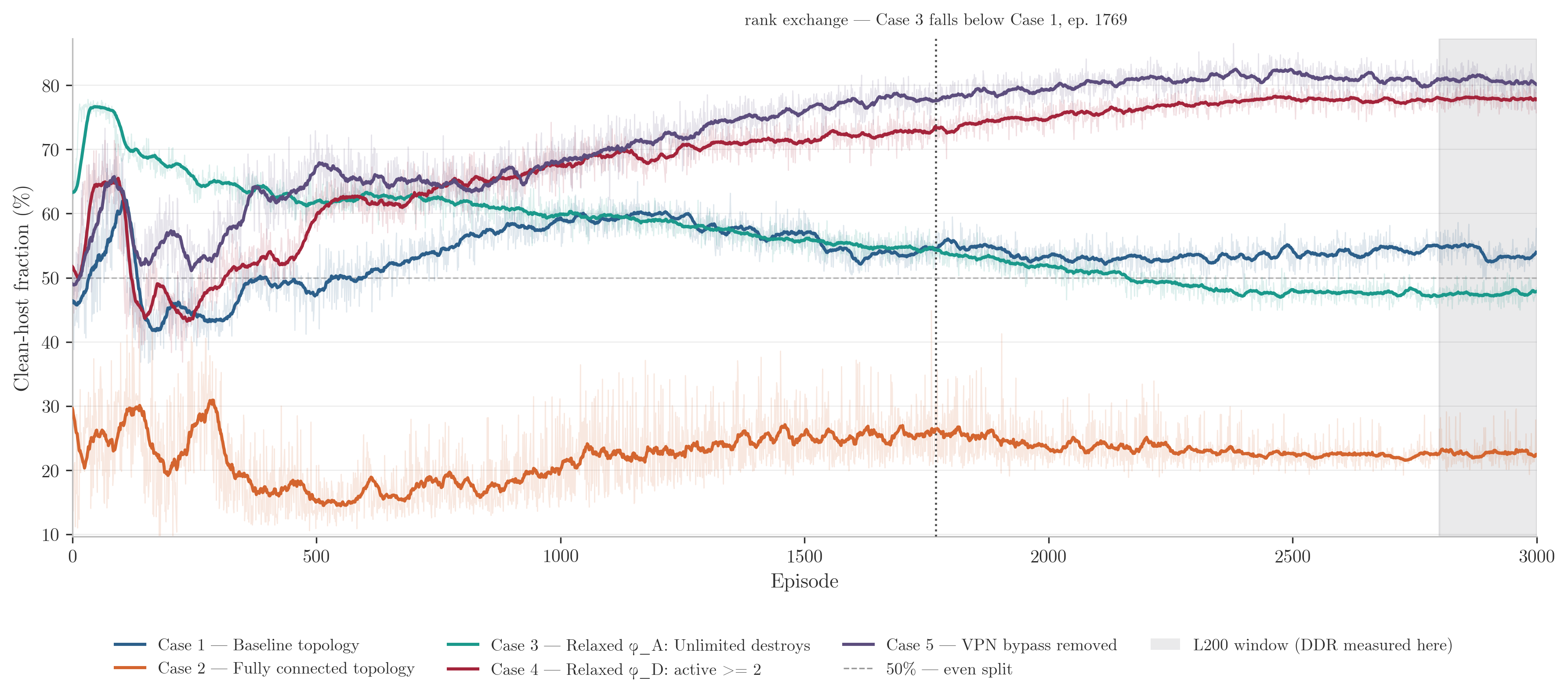}
\caption{Seed-mean clean-host fraction $\mu$ against training episode, ten seeds
per configuration, under a trailing moving average of 25 episodes with the
unsmoothed seed mean drawn faintly behind each curve. The shaded band marks the
final 200 episodes over which $\mathrm{DDR}$ is measured. The dotted line marks
the episode at which Case~3 falls below the reference and remains below it.}
\label{fig:trajectories}
\Description{A line chart with training episode on the horizontal axis running from zero to three thousand and the mean fraction of clean hosts on the vertical axis, expressed as a percentage. One curve per configuration, Cases 1 through 5, each the mean over ten seeds and lightly smoothed, with the unsmoothed mean shown faintly behind it. A shaded band at the right marks the final two hundred episodes over which defender dominance is measured, and every curve is flat within that band. A horizontal dashed line marks an even fifty percent split. Case 3 begins highest of all configurations, rising above three quarters within the first hundred episodes, then declines steadily for the remainder of training and crosses below the reference, which is marked with a dotted vertical line. Cases 4 and 5 rise after an early dip and settle highest; Case 2 remains far below every other configuration throughout.}
\end{figure}

\subsection{Inversions between Structural and Operational Orderings}
\label{sec:demo-inversion}

The harshest threat model in the study does not produce the weakest operational
position. Case~2 is formally the stronger configuration of the pair
(2,\,3) on every static axis---$\mathrm{WIN}$ by $0.056$, $\mathrm{STP}$ by
$0.045$, $\mathrm{SLT}$ by $0.244$---and Case~3, whose adversary operates under
no budget at all, holds more than twice the network: $47.5\%$ against $22.7\%$,
a $24.8$-point gap with disjoint intervals. The formal ordering and the
operational ordering of this pair point in opposite directions.

The same inversion decides a design choice in the pair (4,\,5). Relaxing the
availability requirement dominates removing the bypass edge on every static
axis, $\mathrm{WIN}$ most visibly ($0.679$ against $0.477$); the edge removal
attains the higher operating point ($0.807$ against $0.779$, intervals
disjoint). Ranked formally, the requirement change wins; ranked operationally,
the edge removal does---and the requirement change buys its larger margin by
certifying a weaker property, which neither ranking states on its own. The rule
this pair supplies is general: across changed specifications a larger coordinate
means more defensible \emph{under the changed contract}, and does not on its own
establish a better design.

These two reversals show that structural and operational rankings need not
agree. Because all axes share one orientation convention, the fingerprint
identifies such reversals directly. Agreement supports the same ordering under
both assessments; inversion identifies configurations whose structural and
operational rankings differ.

\subsection{Attribution across Fingerprint Axes}
\label{sec:demo-attribution}

Both specification relaxations and both arena changes move the operating point,
but only the specification changes materially reshape shield latitude. Relative to
the reference, the arena changes (Cases~2 and~5) move $\mathrm{SLT}$ by $-1.6\%$
and $+2.0\%$ while moving $\mathrm{DDR}$ by $-57.9\%$ and $+49.7\%$; the
specification changes (Cases~3 and~4) move $\mathrm{SLT}$ by $-35.6\%$ and
$+13.9\%$, against $\mathrm{DDR}$ movements of $-11.9\%$ and $+44.5\%$. The two classes separate on the signed pair, not on
the outcome axis alone.

That separation locates where an intervention registers. Where the operating
point moves while $\mathrm{SLT}$ is close to unchanged, the defender's
edge-level restriction is not what shifted; where the two move together, it is.
What presents identically on a single axis, ``dominance moved,'' separates into
two response classes. The controlled one-component perturbation fixes which
component was changed, and the transition semantics of \Cref{tab:instance}
constrain how that change can propagate; the cross-axis response says where in
the measurement architecture it appeared.

$\mathrm{SLT}$ moves through two channels, and the two arena cases separate
them. It averages a per-state permitted-action fraction over the
shielded-reachable defender states, so it responds to a change in the masks, in
the domain, or both. Case~5 changes no mask at all: every reachable defender
position it retains carries the reference's permitted-action set, and the rise
in $\mathrm{SLT}$ comes entirely from eighty comparatively restricted positions
leaving the domain. Case~2 does narrow menus, at fifty-six positions, under a
smaller aggregate movement. A change in $\mathrm{SLT}$ therefore does not by
itself establish that the defender's menu narrowed.

The reading requires the signed response and not its magnitude. A ratio
$|\%\Delta\mathrm{DDR}/\%\Delta\mathrm{SLT}|$ discards direction, and grows
large precisely when the denominator is small: Cases~2 and~5 both produce a large
ratio while moving the operating point in opposite directions. The informative
object is the signed vector
$\Delta\mathbf{F} = (\Delta\mathrm{WIN}, \Delta\mathrm{STP},
\Delta\mathrm{SLT}, \Delta\mathrm{DDR})$, of which the pairing above is one
projection. Attribution of this kind is available whenever an analysis measures
the mechanism variables alongside the outcome variable, on commensurable
footing---here $\mathrm{SLT}$ measures the edge-level restriction the shield
imposes and $\mathrm{DDR}$ the outcome, and each alone discards what their joint
movement carries. The signed response vector identifies which fingerprint axes
change under each intervention.

\subsection{Objective-Specific Effects of an Adversary Capability}
\label{sec:demo-pricing}

Unlimited destruction turns out to be a formally severe and operationally minor
concession. Granting it costs $13.5\%$ of the winning region and $35.6\%$ of
shield latitude, but only $11.9\%$ of dominance. Unlimited destruction has a
substantially larger relative effect on shield latitude than on defender
dominance.

The transition semantics explain the split before any training run.
\textsc{Destroy} maps $C \to C$, $X \to Z$ and $D \to Z$ (\Cref{tab:instance}):
it never removes a host from the Clean set, and the hosts it does move were
already outside it, so no application changes $n_C$ and none changes $\mu$. The adversary's sole $\mu$-affecting action is \textsc{Spread}.
\textsc{Destroy} directly affects the availability requirement but not $\mu$;
it also reduces recovery from two defender actions to one. The model contains three coupled channels for the residual
$11.9\%$: the enlarged attractor narrows $\Safe(\cdot)$; \textsc{Destroy}
removes a host that could later have served as a \textsc{Spread} source; and the
shortened recovery path alters occupancy over a run.

Because the safety objective and dominance function are distinct, an adversary
capability can affect them by different amounts. Here the relative effect on
shield latitude is approximately three times the effect on defender dominance.

\subsection{Adaptive Effects of Adversary Restriction}
\label{sec:demo-trajectory}

With no destroy-budget constraint, Case~3 initially attains the highest defender
dominance of the five. In \Cref{fig:trajectories}, Case~3 rises above
$75\%$ clean within the first hundred episodes, then declines for the entire
remainder of training, crossing below the reference at episode $1{,}769$ of the
smoothed curves plotted there, and
settling at $0.475$ against the reference's $0.539$. By contrast,
\Cref{prop:refinement} guarantees that imposing $\phiA$ cannot shrink the
winning region and enlarges it here. The constraint helps
the defender formally and, for more than half of training, helps the adversary
operationally.

The budget constraint changes the distribution of exploratory actions. The
adversary's fifteen actions pair \textsc{Noop}, \textsc{Spread} and
\textsc{Destroy} with the five hosts, and \textsc{Spread} is the only action
class capable of changing $\mu$ (\Cref{sec:demo-pricing})---a particular
\textsc{Spread} is still a no-op wherever its target has no compromised or
detected in-neighbor. Unconstrained, the adversary draws from all fifteen, of
which the five \textsc{Spread} actions are a third; under $\phiA$, once the
budget is spent the five \textsc{Destroy} actions leave $\Legal(v)$ and hence
$\Safe(v)$, and \textsc{Spread} becomes five of the ten that remain. Exploration
is uniform over the admissible set, so \emph{conditional on an exploratory
move}---half of all moves at $\varepsilon = 0.5$---the constrained adversary
samples the one territory-changing class with probability $1/2$ rather than
$1/3$, a $50\%$ increase. Exploitative moves follow learned values rather than
this distribution. The unconstrained adversary initially selects more
$\mu$-neutral exploratory actions. As its value estimates converge, it selects
\textsc{Spread} more frequently and uses unrestricted \textsc{Destroy} actions
against $\phiD$, producing the decline in \Cref{fig:trajectories}.

The result shows that adversary restriction is monotone for winning-set
inclusion but not for finite-horizon learning dynamics. The trajectory is
therefore an auxiliary diagnostic beyond the settled fingerprint coordinate.

\subsection{The Fingerprint}
\label{sec:demo-fingerprint}

The fingerprints summarize the preceding comparisons on a shared absolute
scale (\Cref{fig:five_fingerprints}). Arena changes appear mainly on the
adaptive axis, specification changes affect all four axes, and order inversions
appear as opposing static and adaptive rankings.

\begin{figure}[ht]
\centering
\includegraphics[width=\columnwidth]{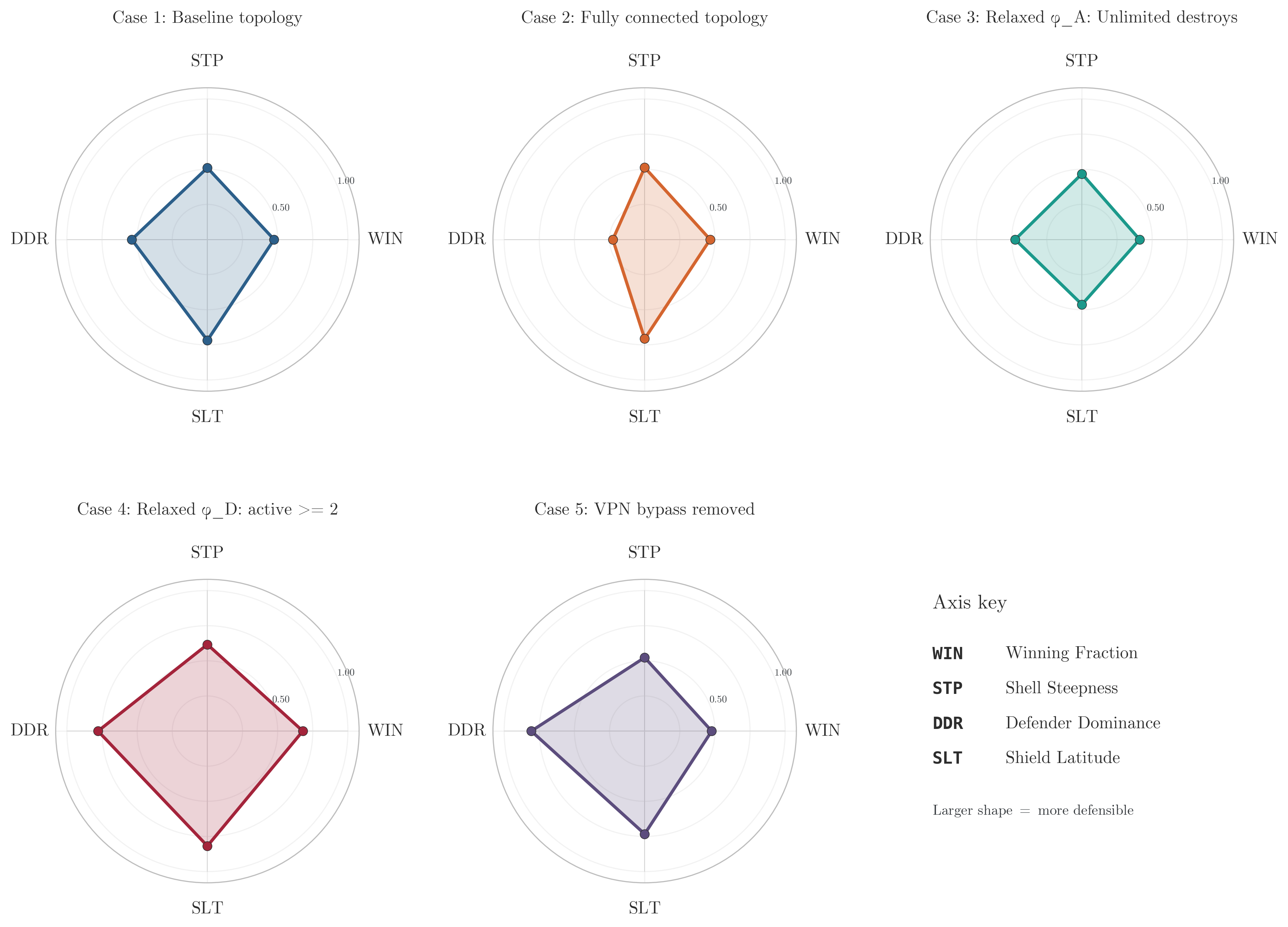}
\caption{Defensibility fingerprints of the five configurations on the absolute
scale of \Cref{sec:fingerprint}. The arena perturbations (Cases~2 and~5) differ
from the reference almost only on the adaptive axis; the specification
perturbations (Cases~3 and~4) reshape the static axes as well.}
\label{fig:five_fingerprints}
\Description{Five radar charts shown as small multiples, one per configuration, each using the same four defensibility axes and the same absolute zero-to-one scale. Placing the five polygons side by side shows how each arena or specification perturbation reshapes the defensibility profile relative to the reference; the two arena cases retain a reference-like shape on the three static axes while differing sharply on the defender-dominance axis, whereas the two specification cases change shape on all four.}
\end{figure}

Cases~2 and~5 each lie within $2.7\%$ of the reference on all three static axes
and differ by $58$ percentage points on the fourth.

Uniform dominance produces larger values on every axis, while the polygon
shape preserves the cross-axis differences reported above.

\subsection{The Instantiation Contract}
\label{sec:demo-boundary}

Three choices in the preceding evaluation belong to the instantiation rather
than to the framework. Together they fix what the instrument means here, and a
different system would fix them differently.

\paragraph{The dominance function.}
$\mu = n_C / |\calH|$ measures the share of hosts in a clean state.
$\mu$ is whatever quantity distinguishes a good position from a bad one within
the safe region, and $\phiD$ is what must never be violated; their separation
enables the objective-specific comparison in \Cref{sec:demo-pricing}.

\paragraph{The reward form.}
The zero-sum form $R_D = 2\mu - 1$ was adopted after per-status rewards---the
defender scoring per Clean host and the attacker per Compromised host---produced
a degenerate operating point under shielded action sets, both agents settling on a
locally favorable configuration and selecting \textsc{Noop} indefinitely. An
instantiation must supply an objective under which the two agents contest the
same quantity.

\paragraph{The engagement reset.}
When play reaches an all-Clean configuration, then with probability $p = 0.1$ the
attacker respawns at the gateway and both automata reset, beginning a new
engagement (\Cref{sec:respawn}); reaching all-Clean is the trigger, not the
result. What $\mathrm{DDR}$ is measured on is therefore a renewal process over
repeated legal engagement segments of the same certified arena, rather than a
single unbroken play. The protocol penalizes pre-emptive isolation because it maps
$C \to I$, removing a host from the clean count and the active count at once,
lowering $\mu$ while advancing $\AD^{(2)}$ toward violation. What an engagement
is, and what resets one, is a property of the system.

\subsection{Synthesis}
\label{sec:synthesis}

The evaluation shows that the certificate and four-axis fingerprint distinguish
configurations sharing the same verdict, localize structural effects by
attractor depth, identify order inversions, and separate objective-specific
responses. These results demonstrate comparative analysis from one encoded
system.


\section{Related Work}
\label{sec:related}

\paragraph{Shielded Reinforcement Learning.}
The foundational ShRL pipeline---LTL specification $\to$ DFA $\to$ product MDP $\to$ shield---was established by Bloem et al.~\cite{bloem2015} and Alshiekh et al.~\cite{alshiekh2018}, with extensions to $k$-stabilizing shields~\cite{konighofer2017} and probabilistic settings~\cite{jansen2020,hameldelecourt2025}, and a recent CACM survey~\cite{konighofer2025}. These works model the interaction as agent versus adversarial environment (nature). The environment is treated adversarially for the purpose of the guarantee---a safety game is solved against every environment behavior---but it is not ordinarily an independently specified, reward-seeking player whose own shield-bounded adaptation is part of what is analyzed, and the synthesis target is a runtime filter for the agent. Multi-agent shielding~\cite{elsayedaly2021,xiao2023} shields cooperative agents under a shared specification. POMDP-shielding extensions~\cite{carr2023,melcer2024} address partial observability under explicit abstraction assumptions. Across this literature, the winning region appears as an intermediate computation; the shield is the deliverable. The framework presented here inverts the role: the certificate is the deliverable, the shield is its witness, and two independent automata---one per specification---enter at different stages of the safety game's solution.

\paragraph{Reactive Synthesis with Environment Assumptions.}
Reactive synthesis distinguishes assumptions from guarantees and wires the
distinction into the synthesis objective: \emph{env satisfies $A$ $\to$ sys
satisfies $G$}~\cite{chatterjee2008,piterman2006,bloem2012}. Such an assumption
may itself be derived by pruning environment edges~\cite{chatterjee2008}.
Shielded analysis gives the distinction a different analytical architecture:
$\phiA$ is an externally declared threat boundary retained as an independently
manipulable design variable, while $\phiD$ defines the safety requirement.
Because they modify different stages of the fixed point, changes to requirement
and adversary capability can be certified, measured, and compared separately
(\Cref{sec:asym-general}, \Cref{sec:demo-inversion},
\Cref{sec:demo-pricing}, \Cref{sec:demo-trajectory}).

\paragraph{Game-Theoretic Security.}
Stackelberg security games~\cite{kiekintveld2009}, adversarial patrolling~\cite{klasna2021}, and dynamic resource allocation~\cite{guan2023} model strategic attacker-defender interaction using optimization-theoretic formulations: equilibrium computation under utility specifications, not winning-region computation under temporal-logic specifications. Shielded analysis contributes an automata-theoretic analysis of the same strategic domain, returning temporal-logic certificates, fixed-point structure, action latitude, and adaptive operating quality rather than equilibrium allocations.

\paragraph{Quantitative Games.}
Mean-payoff parity games and their relatives combine a qualitative
$\omega$-regular objective with a quantitative long-run payoff and compute a
value or optimal strategy for the combined objective~\cite{chatterjee2005}.
Shielded analysis deliberately preserves the certificate, winning-region
extent, forced-depth geometry, action latitude, adaptive operating point, and
learning trajectory as separate outputs. This separation makes disagreement,
rank inversion, objective-specific price, and set-versus-process
nonmonotonicity directly observable rather than collapsing them into a single
optimized quantity.

The network instantiation additionally connects the framework to three applied
research areas.

\paragraph{Network Configuration Verification.}
Static analysis tools decide properties of a network's forwarding behavior
directly from its configurations: reachability, isolation, and policy
compliance, checked proactively over all packet headers and, in the
symbolic formulations, over all environments a configuration may
face~\cite{fogel2015,beckett2017,headerspace}. These tools and the framework
here take the same artifact as input---the configuration a design review
already inspects---and answer different questions about it. Configuration
verification asks what the network does under a given environment; the
certificate asks whether a defender strategy exists that preserves a temporal
safety property against every adversary behavior a second specification
admits. Neither substitutes for the other: reachability is a property of the
static graph, defensibility a property of the game played on it.

\paragraph{Network Security Metrics.}
Attack-graph methods quantify vulnerability composition, probabilistic compromise likelihood, and hardening options over graph-based attack models, including cyclic AND/OR and Bayesian variants~\cite{wang2007,pamula2006,zenitani2023,wang2008}. They do not compute temporal-logic winning regions, attractor shells, or shield-induced action restrictions. The three static metrics of \Cref{sec:metrics} operate on dynamic safety-game attractor decompositions under worst-case adversarial interaction with formal temporal-logic specifications, and the adaptive metric reads shield-constrained play inside the region they describe. The distinction is structural: attack-graph metrics measure static vulnerability composition; the metrics here measure dynamic defensibility under strategic interaction. Shield latitude measures action restrictions that are not represented by the attack-graph composition or empirical agent-performance metrics reviewed here.

\paragraph{RL for Cybersecurity.}
The CybORG/CAGE Challenge~\cite{cyborg,standen2021,kiely2025} has run multiple iterations with extensive MARL submissions for autonomous network defense. Safety in this line of work is enforced through soft methods: reward shaping, constrained optimization, and Lagrangian penalties. Existing CAGE-style evaluations report empirical agent performance against a fixed adversary distribution; they do not produce topology-level temporal-logic defensibility certificates or shield-derived winning-region diagnostics. Empirical success against a particular adversary distribution cannot prove the absence of a winning attacker strategy. Shielded analysis supplies that topology-level formal certificate and its structural diagnostics while retaining adaptive evaluation on the same encoded system.


\section{Discussion}
\label{sec:discussion}

\paragraph{An instrument for relational questions.}
Shielded analysis computes a certificate, structural decomposition, shield, and
operating point from one encoded system. This common basis permits direct
comparison of the individual readings and their relationships.

\paragraph{What the two layers respond to.}
The controlled evaluation shows that the formal and operational layers respond
differently to system changes. Both arena
perturbations of \Cref{sec:experiments}---adding edges, and removing the
bypass---preserve which hosts the adversary can eventually reach from the entry
point; they change only the length and multiplicity of the paths by which reach
is achieved. The certificate records whether a safe defender strategy exists, not
how long the adversary's forcing sequences are or how many paths realize them, and
it is unchanged across both. The static readings are not invariant but
comparatively insensitive: they stay within $2.7\%$ of the reference while
dominance moves by more than half (\Cref{sec:demo-operation}), and the shell
decomposition registers the bypass deletion at ranks~3 and~4
(\Cref{sec:demo-structure}). Over a
topology-only corpus of $90$ randomly generated arenas held to the reference
specifications---every member defensible, and routinely disconnecting a host and
so altering reachability---the static axes respond strongly to edge density:
tie-corrected Spearman $-0.852$ for $\mathrm{WIN}$ and $-0.868$ for
$\mathrm{SLT}$. Across the controlled cases and topology corpus,
reachability-changing interventions primarily affect winning-region extent and
shield latitude, whereas reachability-preserving path changes appear in
attractor depth and adaptive operating quality.

An architect comparing two segmentation designs that differ in redundancy but
not in ultimate reachability should expect the certificate and the static
profile to agree, and should read the difference from the operational layer. One
isolating an asset, or weighing two candidate safety requirements, should expect
the formal layer to separate the designs.

\paragraph{Where the abstraction applies.}
The requirements are structural, not domain-specific: a system whose evolution
is a turn-based interaction between a controller and a strategic adversary, a
safety property the controller must preserve, a characterization of which
adversary behaviors are in scope, and a dominance quantity distinguishing
strong positions from weak ones within the safe region. Wherever these co-occur
and the analysis is offline, the same reading is available---robotics near
hazardous equipment, industrial control under operator error or sabotage,
mission planning against a reactive opponent, cyber-physical systems more
broadly. Power-system contingency analysis is a case in point: the $N\!-\!k$
criterion~\cite{bienstock2010} already fixes which simultaneous component
failures are in scope, which
is an adversary specification in the sense used here. Instantiating any of them
is the modeling exercise of \Cref{sec:demo-boundary}: supplying those four
inputs. The network study carries that abstraction through one complete
instantiation.

The framework applies to deterministic, fully observable, turn-based systems
with a safety objective, an explicit threat boundary, and a dominance function.
Extensions to stochastic, partially observable, or simultaneous-action settings
require corresponding formal solvers.

\paragraph{Computational scope.}
The explicit product grows as $|\St|^{|\calH|}$~\cite{baier2008}: the five-host
segment studied here yields $150{,}000$ positions. At this scale, attractor
computation is effectively instantaneous. The ten adaptive runs account for
nearly all of the approximately ten CPU minutes required per configuration
(\Cref{app:environment}).

\section{Conclusion}
\label{sec:conclusion}

This paper introduced \emph{shielded analysis}, a design-time framework for
certifying and characterizing defensibility under independently specified
safety requirements and threat boundaries. Its asymmetrically constrained
safety game assigns the two specifications to distinct stages of fixed-point
solution: the defender specification seeds the unsafe set, while the adversary
specification filters adversary moves during attractor computation. The
resulting winning region witnesses the defensibility certificate and yields its
executable shield.

The certificate answers the existence question; the four-coordinate
defensibility fingerprint characterizes winning-region extent, forced-depth
geometry, action latitude, and adaptive operating quality. Computing these
readings on one encoded system enables direct analysis of their agreement,
inversion, and objective-specific responses.

The controlled evaluation exposed properties that neither formal verification
nor adaptive evaluation reveals alone. Across a reference segment and four
perturbations, configurations indistinguishable to certification diverged
operationally---one forgotten bypass rule moved defender dominance from
$53.9\%$ to $80.7\%$---operational shifts were located on the mechanism axis
that moved with them, and a capability costing $35.6\%$ of shield latitude cost
only $11.9\%$ of dominance, an approximately threefold difference in relative
response between the two axes. The network evaluation establishes these
analytical capabilities through controlled interventions.

The framework applies when the safety requirement and admissible adversary
behavior can be specified independently. The network instantiation demonstrates
its use for comparing architecture, requirement, and capability changes before
deployment.


\bibliography{bastion}

\appendix

\section{Execution Environment}
\label{app:environment}

All computation was performed on a single consumer laptop; the workload
(explicit-state attractor computation and tabular minimax-Q) is CPU-bound and
single-machine, and no GPU acceleration was used. \Cref{tab:environment}
records the hardware and the principal software versions.

\begin{table}[h]
\centering
\caption{Execution environment. All experiments were run on a single machine;
computation is CPU-only.}
\label{tab:environment}
\smallskip
\begin{tabular}{@{}ll@{}}
\toprule
Component & Specification \\
\midrule
\multicolumn{2}{@{}l}{\textit{Hardware}}\\
CPU        & AMD Ryzen~7 8845HS, 8 cores / 16 threads, 3.80\,GHz \\
Memory     & 16\,GB RAM (14.9\,GiB usable) \\
GPU        & Present (NVIDIA RTX~4060 Mobile) but unused; computation is CPU-only \\
\addlinespace
\multicolumn{2}{@{}l}{\textit{Software}}\\
OS         & Fedora Linux~44 (Workstation), kernel 7.0.12 (x86\_64) \\
Python     & 3.12.13 \\
NumPy      & 2.3.5 \\
SciPy      & 1.16.3 \\
Matplotlib & 3.10.8 \\
\bottomrule
\end{tabular}
\end{table}

\noindent
On this configuration, fixed-point attractor computation is effectively
instantaneous, successor-graph construction takes approximately five seconds per
case, and one MARL training run (3{,}000 episodes $\times$ 1{,}000 steps)
completes in roughly 50 seconds; total wall time across all five cases at ten
seeds each is under one hour.

\end{document}